\theoremstyle{plain}
\newtheorem{theo}{Theorem}
\newtheorem{prop}{Proposition}
\newtheorem{lemm}{Lemma}
\newtheorem{coro}{Corollary}
\theoremstyle{remark}
\newtheorem{remark}{Remark}
\def\zero{\bm{0}}
\def\one{\bm{1}}
\def\a{\bm{a}}
\def\b{\bm{b}}
\def\c{\bm{c}}
\def\e{\bm{e}}
\def\f{\bm{f}}
\def\g{\bm{g}}
\def\h{\bm{h}}
\def\n{\bm{n}}
\def\p{\bm{p}}
\def\q{\bm{q}}
\def\r{\bm{r}}
\def\u{\bm{u}}
\def\x{\bm{x}}
\def\y{\bm{y}}
\def\z{\bm{z}}
\def\A{\bm{A}}
\def\B{\bm{B}}
\def\C{\bm{C}}
\def\D{\bm{D}}
\def\F{\bm{F}}
\def\G{\bm{G}}
\def\H{\bm{H}}
\def\I{\bm{I}}
\def\L{\bm{L}}
\def\M{\bm{M}}
\def\N{\bm{N}}
\def\P{\bm{P}}
\def\Q{\bm{Q}}
\def\R{\bm{R}}
\def\S{\bm{S}}
\def\U{\bm{U}}
\def\V{\bm{V}}
\def\W{\bm{W}}
\def\X{\bm{X}}
\def\Y{\bm{Y}}
\def\Pib{\bm{\Pi}}
\def\LC{\mathcal{L}}
\def\equivSym{\Leftrightarrow}
\def\trans{\top}
\def\trace{\mbox{tr}}
\def\mmin{\mbox{\scriptsize min}}
\def\mmax{\mbox{\scriptsize max}}
\def\phiIn{\phi_{\mbox{\scriptsize in}}}
\def\phiOut{\phi_{\mbox{\scriptsize out}}}
\def\Real{\mathbb{R}}
\def\Prob{\ensuremath{\mathsf P}}
\newcommand{\by}[2]{\ensuremath{#1 \times #2}}
\title{Convex Programming Based Spectral Clustering}
\author{Tomohiko~Mizutani
\thanks{Department of Mathematical and Systems Engineering,
Shizuoka University,
3-5-1 Johoku, Naka-ku, Hamamatsu City, 432-8561, Japan.
{\tt mizutani.t@shizuoka.ac.jp}}}
\date{\today}
\begin{document}

\maketitle

\begin{abstract}

 Clustering is a fundamental task in data analysis, and 
 spectral clustering has been recognized as a promising approach to it.
 Given a graph describing the relationship between data, 
 spectral clustering explores the underlying cluster structure in two stages.
 The first stage embeds the nodes of the graph in real space,
 and the second stage groups the embedded nodes into several clusters.
 The use of the $k$-means method in the grouping stage is currently standard practice. 
 We present a spectral clustering algorithm that uses convex programming in the grouping stage
 and study how well it works.
 This algorithm is designed based on the following observation.
 If a graph is well-clustered,
 then the nodes with the largest degree in each cluster
 can be found by computing an enclosing ellipsoid of the nodes embedded in real space,
 and the clusters can be identified by using those nodes.
 We show that, for well-clustered graphs, the algorithm can find clusters of nodes with minimal conductance.
 We also give an experimental assessment of the algorithm's performance.

 \bigskip \noindent
 {\bfseries Keywords:} spectral clustering, Laplacian, conductance, convex programming
\end{abstract}

\section{Introduction} \label{Sec: intro}
Given a set of items and similarities between pairs of the items,
clustering is the task of partitioning the item set into groups
such that the items within the same group are similar and the items within different groups are dissimilar.
One natural way of representing the task is to use a graph.
We construct a graph such that 
each item corresponds to a node and, 
if a pair of items has  high similarity, there is an edge between them.
The task is cast as one of partitioning the node set into clusters of nodes
such that the nodes within the same cluster are well connected and those within different clusters are poorly connected.

Spectral clustering is a way of finding such clusters in a graph.
It has two stages.
The first stage embeds the nodes of a graph in  real space,
and the second one partitions the embedded nodes into groups.
The embedding uses the eigenvectors of a matrix associated
with the graph, such as the Laplacian.
The grouping employs a classical clustering method such as  $k$-means.
Spectral clustering is said to date back to
the works of Donath and Hoffman \cite{Don73} and Fiedler \cite{Fie73} in the 1970s,
and it was popularized by the works of 
Shi and Malik \cite{Shi00}, Ne et al.\ \cite{Ng02}, Bach and Jordan \cite{Bac03}, Luxburg \cite{Lux07}
in the machine learning and data mining community in the 2000s.
Its effectiveness has been tested on various problems, and
it is now recognized as a promising approach to clustering.

Recently, Peng et al.\ investigated the performance of a spectral clustering algorithm 
that uses the $k$-means method in the grouping stage 
and provided a theoretical justification as to why it works well in practice.
We shall use the abbreviation KSC to refer to
the $k$-means based spectral clustering algorithm.
The results first appeared in the proceedings \cite{Pen15} of COLT 2015, and then
in a journal paper \cite{Pen17}.
They quantified the quality of clusters in a graph in terms of a measurement, conductance.
Clusters with minimal conductance fit the aim of clustering task.
They showed that the output of KSC is a good approximation to such clusters if the input graph is well-clustered.
Later, their results were improved by Kolev and Mehlhorn \cite{Kol16, Kol18}.

A little further explanation may be needed on the results.
Let $G$ be a graph with a node set $V$.
We call a subset $S$ of $V$ a cluster. We call a family of $k$ clusters $S_1, \ldots, S_k$ a $k$-way partition of $G$
if $S_i \cap S_j = \emptyset$ for different $i$ and $j$ and $S_1 \cup \cdots \cup S_k = V$.
The conductance  $\phi(S_i)$ of a cluster $S_i$
is defined to be the ratio of the cut size between $S_i$ and its complement divided by the volume of $S_i$.
We formulate the clustering task on $G$
as the problem of finding a partition $\{S_1, \ldots S_k\}$ of $G$ that minimizes
the maximum of $\phi(S_1), \ldots, \phi(S_k)$.
The $k$-way conductance $\phi_k(G)$ of a graph $G$ is defined to be the minimum value.
We say that a partition $\{S_1, \ldots, S_k\}$ of $G$ is optimal 
if it satisfies $\phi_k(G) = \max \{\phi(S_1), \ldots, \phi(S_k) \}$.
Finding an optimal partition of $G$ is known to be NP-hard \cite{Mat90}.
Let $\lambda_{k+1}$ denote the $(k+1)$ smallest eigenvalue of the normalized Laplacian of $G$.
A graph $G$ is called well-clustered if there is a large gap between $\phi_k(G)$ and $\lambda_{k+1}$;
in other words, $\Upsilon = \lambda_{k+1} / \phi_k(G)$ is large.
In fact, it was shown by Gharan and Trevisan \cite{Gha14} that, if the gap assumption holds, 
there are $k$ clusters in $G$ such that 
the nodes within the same cluster are well connected and those within different clusters are poorly connected.
Peng et al.\ measured how well the output of KSC approximates an optimal $k$-way partition of $G$
and gave approximation guarantees under the assumption that $G$ satisfies $\Upsilon = \Omega(k^3)$.
Later, Kolev and Mehlhorn improved the approximation guarantees under a weaker assumption.

\subsection{Our Contributions} \label{Subsec: contributions}
We present a spectral clustering algorithm that uses convex programming in the grouping stage
and call the algorithm ELLI since an ellipsoid plays an important role in it.
This is built on an extension of the structure theorem shown in \cite{Miz20}. 
The theorem was originally developed by Peng et al.\ \cite{Pen17} 
and it was later extended in \cite{Miz20}.
It implies that, if a graph is well-clustered, then, the nodes with the largest degree in each cluster
can be found by computing an enclosing ellipsoid for the nodes embedded in real space,
and the clusters can be identified by using those nodes.
ELLI is designed on the basis of this observation.
We examine the performance of ELLI from theoretical and practical perspectives.
The main contributions of this study are summarized as follows.

\begin{itemize}
 \item 
       We provide a theoretical analysis of the clustering performance of ELLI.
       In Section \ref{Subsec: results of performance analysis}, we show in Theorem \ref{Theo: main result} that,
       if $\Upsilon$ exceeds some threshold, 
       ELLI returns an optimal $k$-way partition of a graph.
       In contrast to this, no matter how large $\Upsilon$ is,
       the result of Peng et al.\ does not ensure that KSC does so.
       As we saw above, the threshold of Peng et al.\ for KSC depends on only $k$,
       while our threshold for ELLI depends on a graph as well as $k$.
       Thus, in Corollary \ref{Coro: obtained from main result}, 
       we rewrite our threshold using only $k$ and examine how large it can be.

 \item 
       We reveal that algorithms for computing  nonnegative matrix factorizations (NMFs) 
       under the separability condition are useful in the grouping stage of spectral clustering
       In Section \ref{Sec: connection with separable NMFs}, we explain that, if a graph is well-clustered, 
       the algorithms can exactly find the nodes with the largest degree in each cluster.
       This perspective has not been considered so far and provides insight into 
       the design of effective algorithms in the grouping stage.
       We use an ellipsoidal rounding technique developed for 
       computing separable NMFs in \cite{Miz14} and develop ELLI.

 \item
      We present an experimental assessment of ELLI.
      We experimentally tested the effectiveness of ELLI at clustering real data,
      i.e., image datasets whose
      images had been categorized into classes by human judges.
      We applied ELLI to each dataset and
      evaluated how well the clusters found by it matched the classes of the dataset.
      For the evaluation, we used two measures, accuracy (AC) and normalized mutual information (NMI),
      which are commonly used for this purpose.
      The experiments also evaluated the conductance of the clusters found by ELLI.
      We tested two more clustering algorithms of which one was KSC.
      A standard implementation of spectral clustering
      uses the $k$-means method based on Lloyd's algorithm \cite{Llo82}.
      Our implementation of KSC used the $k$-means++  algorithm \cite{Art07}, i.e., its enhancement.
      Since $k$-means++ is probabilistic, we repeated KSC equipped with it multiple times
      and took the average of the measurements for the evaluation of the outputs.
      The experiments revealed that 
      the AC and NMI of ELLI can reach at least the average AC and NMI of KSC.
      The experiments also showed that 
      the conductance of the clusters found by ELLI is often smaller than the value  given by KSC.

\end{itemize}

The rest of this paper is organized as follows.
Section \ref{Sec: preliminaries} explains the notation, symbols, and terminology of graphs, 
which will be used in the subsequent discussion.
It also reviews basic results from spectral graph theory and the spectral clustering algorithm.
Section \ref{Sec: proposed algorithm} explains the details of ELLI 
and shows the theoretical performance in Theorem \ref{Theo: main result}.
Lastly, it describes related work, including
the studies by Peng et al.\ and Kolev and Mehlhorn on the performance of KSC.
Section \ref{Sec: analysis} provides an analysis of ELLI.
Section \ref{Sec: connection with separable NMFs} reviews NMFs and the separability condition.
It then explains why algorithms for computing NMFs 
under the separability condition can be used in the grouping stage of spectral clustering.
Section \ref{Sec: experiments} describes the experimental study.

\medskip \noindent
{\it Notation for vectors and matrices.}
The symbols $\| \cdot \|_1, \| \cdot \|_2$ and $\| \cdot \|_{\infty}$ denote 
the $\ell_1, \ell_2$ and infinity norms of a vector or a matrix.
The symbol $\|\cdot\|_F$ denotes the Frobenius norm of a matrix.
For real numbers $a_1, \ldots, a_n$,
we use $\mbox{diag}(a_1, \ldots, a_n)$ to denote an \by{n}{n} diagonal matrix
having $a_i$ in the $(i,i)$th entry.
We use $\e_i$ to denote the $i$th unit vector and $\I$ to denote the identity matrix.

\begin{remark}
 The previous version of this paper posted on arXiv in 2018 was reviewed by anonymous reviewers.
 Based on their comments and suggestions,  we made major revisions in the current paper.
 In particular, we contained Corollary \ref{Coro: obtained from main result} 
 that was suggested by one of the reviewers;
 the best and worst results of algorithms we tested in Table \ref{Tab: AC, NMI and MCC in 2nd exp};
 the results in case where a neighbor size $p$ was set as $10$ in Figure \ref{Fig: results of 2nd exp};
 and the results for ETL and MNIST datasets in Figure \ref{Fig: results of 2nd exp}.
\end{remark}

\section{Preliminaries} \label{Sec: preliminaries}

\subsection{Graphs and Laplacians} \label{Subsec: graph and Laplacian}

Let $G = (V, E)$ be an undirected graph, where 
$V$ is the set of $n$ nodes $1, \ldots, n$ and $E$ is the set of edges.
We put a weight on each pair of nodes through the function $w: V \times V \rightarrow \Real_{+}$.
Here, the symbol $\Real_{+}$ denotes the set of nonnegative real numbers.
The function $w$ should have the following properties.
For any pair of nodes $u, v \in V$,
$w(u, v) = w(v, u)$  and  $w(u, v) > 0$ if $\{u, v\} \in E$; otherwise, $w(u, v) = 0$.
We call a function $w$ having the properties above a \emph{weight function} on $G$.
The degree $d_u$ of node $u \in V$ is given as $d_u = \sum_{v \in V} w(u,v)$.
Throughout this paper, we always regard a graph $G$
as an undirected one with $n$ nodes $1, \ldots, n$ and a weight function $w$
and assume that every node of $G$ has a positive degree.

Let us explain the notation, symbols, and terminology that will be used in this paper.
Let $G=(V,E)$ be a graph.
A cluster $S$ in $G$ is a subset of the node set $V$.
A $k$-way partition of $G$ is a family of $k$ clusters $S_1, \ldots, S_k$
that satisfy $S_i \cap S_j = \emptyset$ for different $i$ and $j$ and $S_1 \cup \cdots \cup S_k = V$.
For simplicity, we sometimes call it a partition of $G$ or a graph partition.
The symbol $\Gamma$ is used to denote a $k$-way partition $\{S_1, \ldots, S_k\}$ of $G$. 
The symbol $n_i$ is used to denote the number of nodes in $S_i$.
Let $\{S_1, \ldots, S_k\}$ be a partition of $G$ and 
$u$ be the node such that it belongs to $S_i$
and it has the $j$th smallest degree among all nodes in $S_i$.
We use $(i,j)$ to refer to the node $u$ and
$d_{i,j}$ to refer to the degree $d_u$.
Note that the notation depends on the choice of $k$-way partition $\Gamma$.
Following the above notation, $n_i$ nodes in $S_i$ are expressed as $(i,1), \ldots, (i,n_i)$,
and the degree $d_{i,j}$ of each node $(i,j)$ satisfies $d_{i,1} \le \cdots \le d_{i,n_i}$.
The node $(i, n_i)$ belongs to $S_i$ and has the largest degree
among all nodes in $S_i$.
We call $(i, n_i)$ the \emph{representative node} of the cluster $S_i$ 
and the set $\{ (1,n_1), \ldots, (k, n_k) \}$
the \emph{representative node set} of the $k$-way partition $\Gamma = \{S_1, \ldots, S_k\}$.

Next let us review some basic results from spectral graph theory.
The adjacency matrix $\W$ is an \by{n}{n} symmetric matrix such that 
the $(u, v)$th entry stores the weight $w(u, v)$ of the pair of nodes $u, v \in V$.
The degree matrix $\D$
is an \by{n}{n} diagonal matrix such that the $(u, u)$th entry stores the degree $d_u$ of node $u \in V$.
The \emph{Laplacian} $\L$ of $G$ is given as $\L = \D - \W$,
and the \emph{normalized Laplacian} $\LC$ is given as 
$\LC = \D^{-1/2} \L \D^{-1/2}$, which is equivalent to $\I - \D^{-1/2} \W \D^{-1/2}$.
The eigenvalues and eigenvectors of the normalized Laplacian $\LC$
will play an important role in our discussion.
Since $\LC$ is an \by{n}{n} real symmetric matrix,
the $n$ eigenvalues are real and 
the $n$ eigenvectors can be chosen to be orthonormal bases in $\Real^n$.
Furthermore, an easy calculation shows that $\LC$ is positive semidefinite.
Hence, all the eigenvalues are nonnegative.
The smallest eigenvalue is zero,
since $\LC \cdot (\D^{1/2}\one) = \zero$,
where the symbol $\one$ denotes a vector of all ones.
In addition, the largest eigenvalue is less than two.
The multiplicity of the zero eigenvalue 
equals to the number of connected components of $G$.
The above are basic results from spectral graph theory;
for details, see \cite{Chu97, Lux07}.
In this paper, we will always use the symbols $\lambda_1, \ldots, \lambda_n$
to denote the eigenvalues of $\LC$
arranged in nondecreasing order, i.e., $0 = \lambda_1 \le \cdots \le \lambda_n \le 2$.
Moreover, we will always choose the eigenvectors of $\LC$ to be orthonormal
and use the symbol $\f_i$ to denote the eigenvector
corresponding to the $i$th smallest eigenvalue $\lambda_i$.

\subsection{Conductance} \label{Subsec: conductance}
Let $G = (V,E)$ be a graph. Let $S$ be a cluster in $G$. 
The \emph{conductance of a cluster $S$} is defined to be
\begin{equation} \label{Exp: conductance of cluster}
 \phi(S) :=  \frac{w(S, V \setminus S)}{\mu(S)}
\end{equation}
by letting
\begin{equation*}
 \mu(S) := \sum_{u \in S} d_{u} \quad \mbox{and} \quad
  w(S, V \setminus S) := \sum_{u \in S} \sum_{v \in V \setminus S}  w(u, v).
\end{equation*}
Here, $\mu(S)$ is the volume of $S$, and
$w(S, V \setminus S)$ is the cut size between $S$ and its complement $V \setminus S$.
We can see from the definition of $\phi(S)$ that
clusters with low conductance capture the notion of good clusters in $G$,
wherein nodes within the same cluster have high weights and
nodes within different clusters have low weights.
Kannan et al.\ \cite{Kan04} suggested that 
conductance is an effective way of quantifying the quality of clusters in $G$.

The conductance problem asks
one to find a $k$-way partition $\{S_1, \ldots, S_k\}$ of $G$ 
that minimizes the maximum of $\phi(S_1), \ldots, \phi(S_k)$.
The \emph{$k$-way conductance of a graph $G$} is defined to be the minimum value,
and we use the symbol $\phi_k(G)$ to denote it.
That is, 
\begin{equation*}
 \phi_k(G) = \min_{\{S_1, \ldots, S_k\}} \max \{ \phi(S_1), \ldots, \phi(S_k)\}
\end{equation*}
and the minimum is taken over all candidates of $k$-way partitions of $G$.
We say that a $k$-way partition $\{S_1, \ldots, S_k\}$ of $G$ is \emph{optimal} 
if it satisfies $\phi_k(G) = \max \{ \phi(S_1), \ldots, \phi(S_k)\}$.

Finding an optimal $k$-way partition of $G$ is intractable;
it is known to be NP-hard even if $k=2$; see \cite{Mat90}.
There are approximation algorithms for $k=2$,
and in particular, the SDP-based algorithm of Arora et al.\ in \cite{Aro09}
achieves an $O(\sqrt{\log n})$-approximation ratio.
Cheeger inequality bounds $\phi_2(G)$ by using the second smallest eigenvalue $\lambda_2$
of the normalized Laplacian $\LC$ of $G$.
The bound was improved by Kwok et al.\  \cite{Kwo13}.
Regarding the general case,
Lee et al.\ \cite{Lee12} developed a higher-order Cheeger inequality.
It bounds $\phi_k(G)$ by using the $k$th smallest eigenvalue $\lambda_k$ of $\LC$.

Peng et al.\ \cite{Pen17} examined the performance of KSC for a class of graphs, called well-clustered graphs. 
For a graph $G$,  we define
\begin{equation*}
 \Upsilon := \frac{\lambda_{k+1}}{\phi_k(G)}
\end{equation*}
for the $k$-way conductance $\phi_k(G)$ and 
the $(k+1)$th smallest eigenvalue $\lambda_{k+1}$ of the normalized Laplacian $\LC$.
A graph $G$ is called \emph{well-clustered} if $\Upsilon$ is large.
Let us see why it is well-clustered in that case.
The higher-order Cheeger inequality implies that, 
if $\Upsilon$ is large, so is $\lambda_{k+1} / \lambda_k$.
We recall the result of Gharan and Trevisan \cite{Gha14} who studied graphs with a  gap between $\lambda_k$ and $\lambda_{k+1}$.
Let $S$ be a subset of node set in $G$, i.e., cluster.
The \emph{outside conductance of $S$} is defined to be $\phi(S)$.
The \emph{inside conductance of $S$} is defined to be $\phi_2(G[S])$, 
which is the two-way conductance of a subgraph $G[S]$ induced by $S$.
Let $\Gamma = \{S_1, \ldots, S_k\}$ be a $k$-way partition of $G$.
Following the terminology of Gharan and Trevisan,
we say that $\Gamma$ is a \emph{$(\phiIn, \phiOut)$-clustering}
if $\phi_2(G[S_i]) \ge \phiIn$ and $\phi(S_i) \le \phiOut$ for $i=1, \ldots, k$.
They showed in Corollary 1.1 of \cite{Gha14} that, 
if there is a large gap between $\lambda_k$ and $\lambda_{k+1}$, 
there is a $( \Omega( \lambda_{k+1}  / k  ),    O( k^3 \sqrt{\lambda_k}) )$-clustering.
Thus, we could say that $G$ is well-clustered if $\Upsilon$ is large.
A similar observation can be found in Section 1 of \cite{Kol16}.

Kolev and Mehlhorn \cite{Kol16} used a measurement $\Psi$ different from $\Upsilon$ for analyzing the performance of KSC.
We denote by $U$ be the set of all optimal $k$-way partitions of $G$.
Let 
\begin{equation*}
 \bar{\phi}_k(G) = \min_{\{S_1, \ldots, S_k\}\in U} \frac{1}{k} \left( \phi(S_1) + \cdots + \phi(S_k) \right).
\end{equation*}
In analogy with $\Upsilon$, define
\begin{equation*}
 \Psi := \frac{\lambda_{k+1}}{\bar{\phi}_k(G)}.
\end{equation*}
Since $\bar{\phi}_k(G) \le \phi_k(G)$, we have $\Psi \ge \Upsilon$.

\subsection{Spectral Clustering Algorithm} \label{Subsec: spectral clustering algorithm}
Here, we describe the framework of the spectral clustering algorithm.
The input is the normalized Laplacian $\LC$ of a graph $G = (V, E)$ 
and the number $k$ of clusters the user desires.

\begin{enumerate}
 \item (Embedding stage)~Compute the bottom $k$ eigenvectors $\f_1, \ldots, \f_k \in \Real^n$ of $\LC$ 
       and construct the spectral embedding map $F : V \rightarrow \Real^k$ using them.
       Apply $F$ to the nodes $1, \ldots, n$ of $G$ and form a set $X$ of pints
       $F(1), \ldots, F(n) \in \Real^k$.

 \item (Grouping stage)~Find a $k$-way partition $\{X_1, \ldots, X_k\}$ of $X $ 
       using a clustering algorithm the user prefers.
       Return $\{T_1, \ldots, T_k\}$ by letting $T_i = \{u : F(u) \in X_i\}$ for $i=1, \ldots, k$.

\end{enumerate}
The embedding stage constructs a spectral embedding map, which is defined as follows.
Let $\P = [\f_1, \ldots, \f_k]^\trans \in \Real^{k \times n}$ 
for the bottom $k$ eigenvectors $\f_1, \ldots, \f_k$ of $\LC$, 
and $\p_u$ denote the $u$th column of $\P$.
\emph{Spectral embedding map} is a map $F : V \rightarrow \Real^k$ defined by 
\begin{equation} \label{Exp: spectral embedding map}
 F(u) = s_u \cdot \p_u
\end{equation}
for a scaling factor $s_u \in \Real$.
The scaling factor is often set as $s_u = 1 / \sqrt{d_u}$ for the degree $d_u$ of node $u$ or 
$s_u = 1 / \|\p_u\|_2$.
The former was proposed by Shi and Malik \cite{Shi00} and the latter by Ng et al.\ \cite{Ng02}.

It is standard practice to use the $k$-means method based on Lloyd's algorithm \cite{Llo82} 
in the grouping stage (this was suggested in \cite{Ng02, Lux07}).
We quickly review the $k$-means method here.
Let $\p_1, \ldots, \p_n$ be points in $\Real^k$.
We arbitrarily choose a $k$-way partition $\{S_1, \ldots, S_k\}$ of the set $S = \{\p_1, \ldots, \p_n\}$.
As in the case of a $k$-way partition of a graph, we use the symbol $\Gamma$ to refer to it.
The clustering cost function $f$ is given by 
\begin{equation} \label{Exp: clustering cost function}
 f(\Gamma) = \min_{\c_1, \ldots, \c_k \in \Real^k}
  \sum_{i=1}^{k} \sum_{u \in S_i} \|\p_u - \c_i \|_2^2.
\end{equation}
The $k$-means method chooses a $k$-way partition $\Gamma$ of $S$
to minimize the clustering cost function $f(\Gamma)$.
Finding $\Gamma$ that minimizes $f$ is shown to be NP-hard in \cite{Alo09, Mah09}.
Lloyd's algorithm approximately solves the minimization problem.
It starts by arbitrarily choosing $\c_1, \ldots, \c_k$ 
as initial seeds and then minimizes $f(\Gamma)$ by alternatively fixing either $\c_1, \ldots, \c_k$ or $S_1, \ldots, S_k$.
This works well in practice.
The $k$-means++ algorithm presented in \cite{Art07} provides a smart choice of initial seeds.
It chooses $\c_1$ uniformly at random from the set of data points
and then chooses $\c_{i+1}$ from the set according to a probability determined 
by the choice of $\c_1, \ldots, \c_i$.

\section{Algorithm and Analysis Results}\label{Sec: proposed algorithm}

\subsection{Outline of ELLI} \label{Subsec: outline}
Peng et al.\ developed the structure theorem (Theorem 3.1 of \cite{Pen17}) for analyzing the performance of KSC.
Later, the theorem was extended in \cite{Miz20}.
ELLI is built upon the extension of the structure theorem.
Let us recall it here.
For a $k$-way partition $\{S_1, \ldots, S_k\}$ of a graph, 
we define the {\it indicator} $\g_i \in \Real^n$ of $S_i$
to be the vector whose $j$th element is one if $j \in S_i$ and zero otherwise.
The {\it normalized indicator} $\bar{\g}_i \in \Real^n$ of $S_i$ is given as 
\begin{equation*}
 \bar{\g}_i = \frac{\D^{1/2}\g_i}{\| \D^{1/2}\g_i \|_2}
\end{equation*}
for the degree matrix $\D$ of the graph.
Note that $\| \D^{1/2}\g_i \|_2$ is equal to $\sqrt{\mu(S_i)}$.

\begin{theo}[Corollary 1 of \cite{Miz20}] \label{Theo: structure theorem}
 Let a graph $G$ satisfy $\Upsilon > 0$.
 Let a $k$-way partition $\{S_1, \ldots, S_k\}$ of $G$ be optimal.
 Form $\bar{\G} = [\bar{\g}_1, \ldots, \bar{\g}_k] \in \Real^{n \times k}$ 
 for the normalized indicators $\bar{\g}_1, \ldots, \bar{\g}_k$ of $S_1, \ldots, S_k$,
 and form $\F = [\f_1, \ldots, \f_k] \in \Real^{n \times k}$ 
 for the bottom $k$ eigenvectors $\f_1, \ldots, \f_k$ of the normalized Laplacian of $G$.
 Then, there is some \by{k}{k} orthogonal matrix $\U$ such that 
 \begin{equation*}
  \| \F \U - \bar{\G} \|_2   \le  k / \Upsilon +  \sqrt{k / \Upsilon}.
 \end{equation*}
\end{theo}
Since the relation $\sqrt{x} \ge x$ holds for $0 \le x \le 1$,
the corollary implies that $\| \F \U - \bar{\G} \|_2   \le  2 \sqrt{k / \Upsilon}$ if $\Upsilon \ge k$.
A spectral clustering algorithm maps the nodes $1, \ldots, n$ of a graph 
onto the points $F(1), \ldots, F(n)$ in $\Real^k$ by using the spectral embedding map $F$ 
and then partitions $F(1), \ldots, F(n)$ into $k$ groups.
Theorem \ref{Theo: structure theorem} tells us how  $F(1), \ldots, F(n)$ are located in $\Real^k$.
Let $\P = \F^\trans \in \Real^{k \times n}$, and $\p_u$ denote the $u$th column of $\P$.
In the same way, let $\Q = \bar{\G}^\trans \in \Real^{k \times n}$, 
and $\q_u$ denote the $u$th column of $\Q$.
We choose the spectral embedding map defined by $F(u) = \p_u$.
The theorem implies that $\P$ and $\Q$ are related as follows:
\begin{equation*} 
 \P = \U \Q + \R
\end{equation*}
where 
$\U$ is a \by{k}{k} orthogonal matrix and 
$\R$ is a \by{k}{n} matrix that satisfies $\|\R\|_2 \le   2 \sqrt{k / \Upsilon}$ if $\Upsilon \ge k$.
Our choice of $F$ satisfies $F(u) = \p_u$.
The relationship shown above tells us that $\p_u$ is close to $\U\q_u$  
for $u = 1, \ldots, n$ if $\Upsilon$ is large.

Let us look closely at the columns of $\P$ and $\Q$.
The matrix $\Q$ is the transpose of
$\bar{\G} = [\bar{\g}_1, \ldots, \bar{\g}_k]$
where $\bar{\g}_i = \D^{1/2} \g_i /  \|\D^{1/2} \g_i\|_2$
for the indicator $\g_i$ of $S_i$ and the degree matrix $\D$ of $G$.
Hence, if the node $u$ belongs to $S_i$, the $u$th column $\q_u$ of $\Q$ is 
\begin{equation*}
\q_u  = \sqrt{\frac{d_u}{\mu(S_i)}} \e_i.
\end{equation*}
Here, recall the notation for describing the nodes of a graph
that we introduced in Section \ref{Subsec: graph and Laplacian}.
Let $\{S_1, \ldots, S_k\}$ be a $k$-way partition of a graph
and $u$ be the node such that it belongs to $S_i$ and 
has the $j$th smallest degree among all nodes in $S_i$.
The notation $(i,j)$ refers to the node $u$.
Following this notation, 
let $\p_{i,j}$, $\q_{i,j}$ and $\r_{i,j}$ denote the $u$th columns
$\p_u$, $\q_u$, and $\r_u$ of $\P$, $\Q$, and $\R$, respectively.
The columns of $\P$ and $\Q$ can be expressed as 
\begin{equation} \label{Eq: expression of p and q}
 \p_{i,j} = \alpha_{i,j} \u_i  + \r_{i,j} \quad \mbox{and} \quad
  \q_{i,j} = \alpha_{i,j} \e_i
\end{equation}
by using $\alpha_{i,j}$ defined by
\begin{equation} \label{Exp: alpha}
 \alpha_{i,j} := \sqrt{\frac{d_{i,j}}{\mu(S_i)}}
\end{equation}
for $i=1,\ldots,k$ and $j = 1, \ldots, n_i$.
Here, $\u_i$ denotes the $i$th column of the orthogonal matrix $\U$.
Let $\Upsilon \ge k$.
Since $\|\r_{i,j} \|_2 \le \|\R\|_2 \le 2 \sqrt{k / \Upsilon}$ holds,
the distance between the point $\p_{i,j}$  and the line spanned by $\u_i$ is at most $2 \sqrt{k / \Upsilon}$.
Hence, if $\Upsilon$ is large, then $\p_{i,j}$ is close to the line spanned by $\u_i$
that is orthogonal to $\u_\ell$ for $\ell \neq i$.
Figure \ref{Fig: Spectral embedding} illustrates the columns of $\P$.

\begin{figure}[h]
 \centering
 \includegraphics[width=0.85\linewidth]{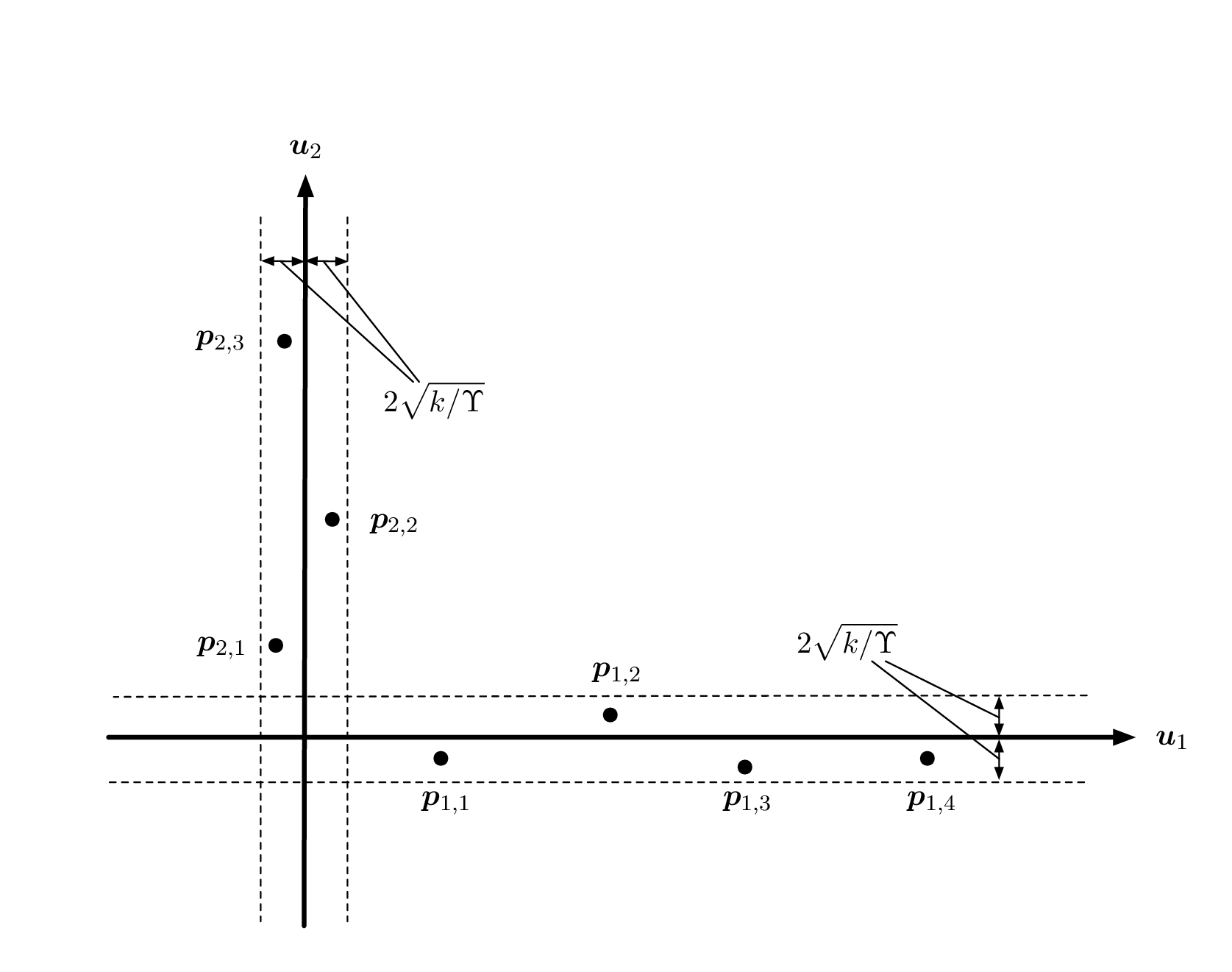}
 \caption{Illustration of the columns $\p_{i,j}$ of $\P$ in the case of $k=2$.}
 \label{Fig: Spectral embedding}
\end{figure}

The task in the grouping stage is to partition the columns $\p_1, \ldots, \p_n$ of $\P$ into $k$ groups.
In particular, the goal is to find $k$ groups that correspond to $k$ clusters $S_1, \ldots, S_k$ 
in an optimal $k$-way partition of a graph.
Based on the observations described above, we develop an algorithm for this task.
In particular, we focus on the orthogonality of $\u_1, \ldots, \u_k$.
Assume that we have exactly one element for each $S_1, \ldots, S_k$.
Let $u_i$ denote the element of $S_i$ that we have
and  $I$ be the set of $u_1, \ldots, u_k$.
Our strategy is as follows.
Initialize  sets $T_1, \ldots, T_k$ to be empty,
and repeat the following procedure from $v=1$ until $n$:
find 
\begin{equation*}
 i^* = \arg \max_{i = 1, \ldots, k} \p_{u_i}^\trans \p_v 
\end{equation*}
for column $\p_v$, and store the column index $v$ in $T_{i^*}$.
The obtained $T_i$ coincides with $S_i$ for $i=1, \ldots, k$ if $\R = \zero$.
This is because
the value of $\p_{u_i}^\trans \p_v$ for $v \in S_j$
is positive if $i = j$; otherwise, it is zero,
and hence, $v$ is stored in $T_j$.
In Section \ref{Subsec: analysis of Step 3}, we show in Theorem \ref{Theo: cluster extraction} 
that $T_i$ still coincides with $S_i$ for $i=1, \ldots, k$ 
if $\| \R \|_2$ is smaller than some threshold.

In the implementation of this strategy,
a question arises as to how to find the set $I$ that
contains $k$ elements belonging to each of $S_1, \ldots, S_k$.
To address this question, we leverage the ellipsoidal rounding (ER) algorithm in \cite{Miz14}.
That is, we compute the minimum-volume ellipsoid, centered at the origin, 
that contains the columns of $\P$ and then find all points on the boundary of the ellipsoid.
ER was originally developed for solving separable NMF problems 
whose formal description is given in Section \ref{Sec: connection with separable NMFs}.
In Section \ref{Subsec: analysis of Step 2}, 
we show in Theorem \ref{Theo: finding cluster representatives} that
the set of obtained points exactly coincides with the representative node set of $\{S_1, \ldots, S_k\}$
if $\|\R\|_2$ is smaller than some threshold.

\subsection{Description of the Algorithm} \label{Subsec: algorithm description}
We describe each step of ELLI in Algorithm \ref{Alg: ELLI}.
Step 1 is the embedding stage, and Steps 2 and 3 are the grouping stage.
Step 2 aims to find the representative node set of a partition of a graph $G$.
Step 3 aims to find the partition of $G$ by using the representative node set.

Step 2 should be explained in detail.
The ellipsoid centered at the origin in $\Real^k$ is
a set $H = \{ \a \in \Real^k : \a^\trans \M \a \le 1\}$
for a \by{k}{k} symmetric positive definite matrix $\M$.
The volume of $H$ is $ v(k) / \sqrt{\det \M}$, 
where $v(k)$ is the volume  of a unit ball in $\Real^k$,
and the value depends on  $k$.
Step 2 constructs a minimum-volume enclosing ellipsoid (MVEE) centered at the origin
for the set $X$ of points $\p_1, \ldots, \p_n$ in $\Real^k$.
The ellipsoid can be obtained by solving an optimization problem
with a symmetric matrix variable $\X$,
\begin{equation*}
\begin{array}{lll}
 \Prob(S): & \mbox{minimize}    &  - \log \det \X, \\
           &  \mbox{subject to} & \p^\trans \X \p \le 1 \ \mbox{for all} \ \p \in S, \\
           &                    & \X \succ \zero.  
\end{array}
\end{equation*}
The notation $\X \succ \zero$ means that $\X$ is positive definite.
The origin-centered MVEE for $S$, denoted by $H(S)$, 
is given as $H(S) = \{ \a \in \Real^k : \a^\trans \X \a \le 1\}$
for the optimal solution $\X$ of $\Prob(S)$.
We call a point $\p_i$ and its index $i$ 
the \emph{active point} and the \emph{active index} of $H(S)$,
if $\p_i$ satisfies $\p_i^\trans \X \p_i = 1$;
in other words, $\p_i$ lies on the boundary of $H(S)$.
Step 2 may use the successive projection algorithm (SPA) in \cite{Ara01, Gil14a} 
that is usually used for solving separable NMF problems.
In Section \ref{Sec: connection with separable NMFs}, we explain the connection between 
finding the representative node set of a partition of $G$ and solving separable NMF problems.

\begin{algorithm}
 \caption{ELLI: Convex programming based spectral clustering}
 \label{Alg: ELLI}
 \smallskip
 Input: $\LC$, the normalized Laplacian of a graph;
 and $k$, the  desired number of clusters. \\
 Output: $\{T_1, \ldots, T_k\}$.
 \begin{enumerate}[1.]
  \item Compute the bottom $k$ eigenvectors $\f_1, \ldots, \f_k$ of $\LC$.
	Let $\P = [\f_1, \ldots, \f_k]^\trans \in \Real^{k \times n}$ and 
	form the set $X = \{\p_1, \ldots, \p_n\}$ for the columns $\p_1, \ldots, \p_n$ of $\P$.

  \item Compute the minimum-volume enclosing ellipsoid $H(S)$ centered at the origin
	for the set $X$, and construct the set $I$ of active indices of $H(S)$.
	If $|I| > k$, 
	choose $k$ elements from $I$ by using the successive projection algorithm,
	and update $I$ by storing the chosen $k$ elements in it.

  \item	Let $u_1, \ldots, u_k$ denote the elements of $I$.
	Set $\bar{\p}_i  = \p_i / \| \p_i \|_2$ for $i = 1, \ldots, n$.
	Initialize the sets $T_1, \ldots, T_k$ to be empty,
	and repeat the following procedure from $v = 1$ until $n$.
	\begin{enumerate}[-]
	 \item Pick $\bar{\p}_v$ and
	       find $\displaystyle i^* = \arg \max_{i = 1, \ldots, k} \bar{\p}_{u_i}^\trans \bar{\p}_v$;
	       if multiple indices achieve the maximum, choose one of them.
	 \item Update $T_{i^*}$ to be $T_{i^*} \cup \{v\}$.
	\end{enumerate}
	Then, return the family of sets $T_1, \ldots, T_k$.
 \end{enumerate}
\end{algorithm}

Let us examine the computational cost of Steps 2 and 3
(as Step 1 is a common to spectral clustering algorithms).
The main cost in Step 2 is in computing the optimal solution of problem $\Prob(S)$.
This is a convex programming problem, and efficient algorithms exist for solving it.
Khachiyan \cite{Kha96} developed the Frank-Wolfe algorithm for solving the dual problem
and evaluated the computational cost.
Kumar and Yildirim  \cite{Kum05} modified the algorithm and 
showed that the modification returns a $(1+\epsilon)$-approximation solution in $O(nk^3 / \epsilon)$.
An interior-point method within a cutting plane framework
can quickly solve the problem in practice.
The main cost in Step 3 is in computing $\bar{\p}_{u_i}^\trans \bar{\p}_v$
for $i=1,\ldots,k$ and $v=1,\ldots,n$. The computation takes $O(nk^2)$.

The author presented a spectral clustering algorithm in \cite{Miz15}.
This algorithm shares Steps 1 and 2 in common with ELLI, but does not share Step 3.
That manuscript mainly studied
the similarity between algorithms for spectral clustering and separable NMFs.

\subsection{Results of Performance Analysis} \label{Subsec: results of performance analysis}
Here, we state the results of our analysis;
the details are given in Section \ref{Sec: analysis}. 
Let $\alpha_{i,j}$ be defined as in (\ref{Exp: alpha}).
They satisfy $\alpha_{i,1} \le \cdots \le \alpha_{i,n_i}$.
Define
\begin{equation*}
\alpha_{\mmin} := \min_{\substack{i=1, \ldots, k \\ j=1, \ldots, n_i}} \alpha_{i,j}
 \quad \mbox{and} \quad
 \hat{\alpha}_{\mmin} := \min_{i=1, \ldots, k} \alpha_{i, n_i}.
\end{equation*}
We can rewrite $\alpha_{\mmin}$ as $\alpha_{\mmin} = \min_{i=1, \ldots, k} \alpha_{i,1}$.
Define 
\begin{equation} \label{Exp: theta}
 \theta_{i,j} := \frac{\alpha_{i,j}}{\alpha_{i,n_i}}
  = \sqrt{\frac{d_{i,j}}{d_{i, n_i}}}
\end{equation}
for $i = 1, \ldots, k$ and $j = 1, \ldots, n_i-1$.
As is the case with $\alpha_{i,j}$,
they satisfy $\theta_{i,1} \le \cdots \le \theta_{i, n_i-1}$.
Define 
\begin{equation*}
 \theta_{\mmin} := \min_{\substack{i=1,\ldots,k \\ j=1,\ldots,n_i-1}} \theta_{i,j}
 \quad \mbox{and} \quad
 \theta_{\mmax} := \max_{\substack{i=1,\ldots,k \\ j=1, \ldots, n_i-1}} \theta_{i,j}.
\end{equation*}
We can rewrite them as $\theta_{\mmin} = \min_{i=1,\ldots,k}\theta_{i,1}$ and $\theta_{\mmax} = \max_{i=1,\ldots,k}\theta_{i,n_i-1}$.
Now let us introduce the parameters $\alpha$ and $\theta$.
These parameters are determined by $\alpha_{i,j}$ and $\theta_{i,j}$,
which are determined by choosing one of the $k$-way partitions of a graph.
Let a $k$-way partition $\Gamma$ be optimal.
The parameter $\alpha$ is set as 
\begin{equation*}
 \alpha = \hat{\alpha}_{\mmin}
\end{equation*}
for $\hat{\alpha}_{\mmin}$ determined by the optimal $k$-way partition $\Gamma$.
This satisfies $\alpha > 0$, since $d_{i,j}$ are all positive.
The parameter $\theta$ is set as
\begin{equation*}
 \theta = \min \biggl\{\frac{1}{2} (1 - \theta_{\mmax}), \ (17-12\sqrt{2})\theta_{\mmin} \biggr\}
\end{equation*}
for $\theta_{\mmin}$ and $\theta_{\mmax}$ determined by the optimal $k$-way partition $\Gamma$.
This satisfies $\theta \ge 0$, 
since $0 < \theta_{\mmin}\le \theta_{\mmax} \le 1$.
Here, $\theta_{\mmin}$ is strictly greater than zero,
as  $d_{i,j}$ are all positive.
Theorem \ref{Theo: main result} is the main result of our analysis.

\begin{theo} \label{Theo: main result}
 If a graph $G$ satisfies
 \begin{equation*}
  \Upsilon  > \frac{4k}{(\theta \alpha)^2},
 \end{equation*}
 then the output of ELLI coincides with an optimal $k$-way partition of $G$.
\end{theo}
The proof is given in Section \ref{Subsec: proof of our main results}.
Here, let us compare Theorem \ref{Theo: main result} with the results of Peng et al.\  \cite{Pen17} and Kolev and Mehlhorn \cite{Kol16} 
(a detailed description of their results is given in Section \ref{Subsec: related work}).
The results of Peng et al.\
tell us that the output of KSC gets closer to the optimal $k$-way partition of a graph as $\Upsilon$ gets larger.
However, it does not ensure that the output is exactly the optimal one no matter how large $\Upsilon$ is.
The same goes for the results of Kolev and Mehlhorn.
Meanwhile, Theorem \ref{Theo: main result} ensures that
the output of ELLI is the optimal one if $\Upsilon$ exceeds some threshold.
But, the threshold could be large.
Let us look at this in more detail.
Peng et al.\ showed the performance of KSC for graphs satisfying $\Upsilon = \Omega(k^3)$.
Let us compare the bound on $\Upsilon$ in Theorem \ref{Theo: main result} with that of Peng et al.
Although the bound of Theorem \ref{Theo: main result} contains $\theta$ and $\alpha$ as well as $k$,
we can rewrite it  using only $k$ as in the following corollary.
\begin{coro} \label{Coro: obtained from main result}
 Let positive real numbers $p, q, r$ satisfy the following three conditions.
 For every $ i=1, \ldots, k $,
 \begin{itemize}
  \item $\displaystyle d_{i, n_i-1} \le \left( 1 - \frac{1}{k^{p/2}} \right)^2 \cdot d_{i, n_i}$,
  \item $\displaystyle d_{i,1} \ge \left( \frac{1}{k^q} \right) \cdot d_{i, n_i}$,
  \item $\displaystyle d_{i,n_i} \ge \left( \frac{1}{k^r} \right) \cdot \mu(S_i)$.
\end{itemize}
 If a graph $G$ satisfies $\Upsilon = \Omega \left( k^{(r+1) \cdot \max\{p, q\}} \right)$, 
 then  the output of ELLI coincides with an optimal $k$-way partition of $G$.
\end{coro}
The proof is given in Section \ref{Subsec: proof of our main results}.
We can  bound $\mu(S_i)$ from above by $n_i \cdot d_{i, n_i}$.
This  bound implies that the choice of $r = \log_k n_i$ 
satisfies the third condition of Corollary \ref{Coro: obtained from main result}.
Hence, if $n_i \ge k^2$ and $\max\{p,q\} \ge 1$, we have  $(r+1) \cdot \max\{p, q\} \ge 3$.
Accordingly, in most cases, Theorem \ref{Theo: main result} imposes a stronger restriction on graphs
compared with the results of Peng et al.

\subsection{Related Work} \label{Subsec: related work}
We  describe the results of Peng et al.\ \cite{Pen17} in more detail.
Let a $k$-way partition $\{S_1, \ldots, S_k\}$ of a graph be optimal.
We choose an algorithm for minimizing the clustering cost function $f$ shown in (\ref{Exp: clustering cost function})
and assume that the algorithm has an approximation ratio of $\eta$.
Let $T_1, \ldots, T_k$ be the output of a spectral clustering algorithm that
uses a $k$-means method based on the $\eta$-approximation algorithm.
Peng et al.\ showed in Theorem 1.2 of \cite{Pen17} that,  
if a graph satisfies $\Upsilon = \Omega(k^3)$,
then the following holds after a suitable renumbering of the output of the algorithm.
\begin{equation*}
 \mu(S_i \triangle T_i) = O\biggl(\frac{\eta k^3}{\Upsilon} \biggr) \mu(S_i)
  \quad \mbox{and} \quad
  \phi(T_i) = 1.1 \phi(S_i) + O\biggl(\frac{\eta k^3}{\Upsilon} \biggr). 
\end{equation*}
Here, the notation $S_i \triangle T_i$ denotes the symmetric difference of the sets $S_i$ and $T_i$,
i.e., $S_i \triangle T_i = (S_i \setminus T_i) \cup (T_i \setminus S_i)$.
This result tells us that, if $\Upsilon = \Omega(k^3)$,
the output $\{T_1, \ldots, T_k\}$ is close to the optimal $k$-way partition $\{S_1, \ldots, S_k\}$ of the graph, 
and, in particular, the output gets closer to the optimal one as $\Upsilon$ gets larger.
In \cite{Pen17}, Peng et al.\ also developed a nearly linear time algorithm for clustering  
by using the heat kernel of a graph and  nearest neighbor data structures.

Kolev and Mehlhorn \cite{Kol16} improved on the results of Peng et al.
They showed in Theorem 1.2 of their paper that, 
if a graph satisfies $\Psi = \Omega(k^3)$, 
then the following holds after a suitable renumbering of the output of the algorithm.
\begin{equation*}
 \mu(S_i \triangle T_i) = O\biggl(\frac{\eta k^2}{\Psi} \biggr) \mu(S_i)
  \quad \mbox{and} \quad
 \phi(T_i) = 1.1 \phi(S_i) + O\biggl(\frac{\eta k^2}{\Psi} \biggr). 
\end{equation*}
The results of Kolev and Mehlhorn are an improvement over 
those of Peng et al.
The bounds on the approximation accuracy are reduced by a factor of $k$ 
and the gap assumption is weakened  due to $\Psi \ge \Upsilon$. 
In \cite{Kol16}, Kolev and Mehlhorn also analyzed a spectral clustering algorithm
that uses a variant of Lloyd's algorithm, which was proposed in \cite{Ost12}, for the $k$-means method.
The results of Kolev and Mehlhorn  were further improved in \cite{Miz20}.

There is a line of research that explores spectral clustering from a theoretical perspective.
Spectral clustering maps the nodes of a graph onto points in  real space through the spectral embedding map.
Using the Davis-Kahan theorem from matrix perturbation theory,
Ng et al.\ \cite{Ng02} showed that the resulting points are nearly orthogonal.
Kannan et al.\  \cite{Kan04} introduced bicriteria to quantify the quality of clusters
where one criterion is the inside conductance of a cluster,
which was explained in Section \ref{Subsec: conductance}, 
and the other is the total weight of the inter-cluster edges.
They assumed that a graph has clusters such that 
the inside conductance of the clusters is large and 
the total weight of the inter-cluster edges is small.
They evaluated how close the output is to the clusters.
As we saw in Section \ref{Subsec: conductance},
Gharan and Trevisan \cite{Gha14} showed that, if there is a large gap between $\lambda_k$ and $\lambda_{k+1}$, 
there exists a $k$-way partition of a graph with a large inside conductance and small outside conductance.
Here, recall that $\lambda_k$ and $\lambda_{k+1}$ are the $k$th and the $(k+1)$th smallest eigenvalues 
of the normalized Laplacian of the graph.
They also showed that, if a graph satisfies $\lambda_{k+1} > 0$, 
there is a polynomial time algorithm 
that outputs a $\ell$-way partition of the graph that is 
a $(\Omega(\lambda_{k+1}^2 / k^4), O(k^6\sqrt{\lambda_k}) )$-clustering where $1 \le  \ell \le k$.
Dey et al.\ \cite{Dey19} developed a greedy algorithm that partitions the node set of a graph 
into clusters with a large inside conductance and small outside conductance.
They showed that, if there is a large gap between $\lambda_k$ and $\lambda_{k+1}$,
the output of the algorithm is close to a $(\Omega(\lambda_{k+1} / k), O(k^3\sqrt{\lambda_k}) )$-clustering.
Sinop \cite{Sin16} studied a spectral clustering algorithm in the context of the edge expansion problem,
which is related to the conductance problem,
and evaluated the accuracy of its output by using a similar measurement to $\Upsilon$.

There is also a considerable amount of research on spectral clustering on a random graph. 
In the planted partition model, one assumes that the node set is partitioned into several clusters
and edges connecting the nodes are stochastically generated:
any two nodes in the same cluster have an edge with probability $p$, and
any two nodes in different clusters have an edge with probability $q$.
McSherry \cite{Mcs01} showed that spectral clustering can identify the clusters
with high probability if $p$ and $q$ lie within some range.
Rohe \cite{Roh11} and Lei \cite{Lei15} studied KSC on a stochastic block model.

\section{Analysis of the Algorithm} \label{Sec: analysis}

\subsection{Step 2} \label{Subsec: analysis of Step 2}
We  analyze Step 2 of ELLI and state the result (Theorem \ref{Theo: finding cluster representatives}).
The analysis of Step 3 and the result (Theorem \ref{Theo: cluster extraction}) are given in the next section.
Let $\P \in \Real^{k \times n}$ be the matrix constructed in Step 1.
We take a $k$-way partition $\Gamma = \{ S_1, \ldots, S_k \}$ of a graph and 
construct $\Q = [\bar{\g}_1, \ldots \bar{\g}_k]^\trans \in \Real^{k \times n}$
for the normalized indicators $\bar{\g}_1, \ldots, \bar{\g}_k$ of $S_1, \ldots, S_k$.
Choosing a \by{k}{k} orthogonal matrix $\U$, we express $\P$ as
\begin{equation} \label{Exp: P}
 \P = \U\Q + \R
\end{equation}
where $\R$ is a \by{k}{n} and serves as the residual between $\P$ and $\U\Q$.
In the analysis of Steps 2 and 3, we use this expression for $\P$. 
As explained in Section \ref{Subsec: outline}, 
if $\Gamma$ is optimal and $\Upsilon \ge k$ holds,
there is some orthogonal matrix $\U$ such that $\R$ satisfies $\|\R\|_2 \le 2 \sqrt{k / \Upsilon}$.
It should be noted that we do not specify $\Gamma$ to be optimal
in Theorems  \ref{Theo: finding cluster representatives} and \ref{Theo: cluster extraction}.
We will use the result presented in \cite{Miz14} for analyzing Step 2.
\begin{prop}[Corollary 4 of \cite{Miz14}] \label{Prop: active points}
 Let $\a_1, \ldots, \a_k, \b_1, \ldots, \b_m$ be points in $\Real^k$, 
 and let $\M = [\a_1, \ldots, \a_k] \in \Real^{k \times k}$.
 Assume that the points satisfy the following conditions.
 \begin{itemize}
  \item $\M$ is nonsingular.
  \item For any $\b \in \{\b_1, \ldots, \b_m\}$,
	there exists some vector $\c \in \Real^k$ such that $\b = \M\c$ and $\|\c\|_2 < 1$.
 \end{itemize}
 Let $H(S)$ be an origin-centered MVEE for the set $S$ of points $\a_1, \ldots, \a_k, \b_1, \ldots, \b_m$.
 Then, the active points of $H(S)$ are $\a_1, \ldots, \a_k$.
\end{prop}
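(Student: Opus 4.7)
The plan is to reduce the problem to first computing the origin-centered MVEE of just $\{\a_1, \ldots, \a_k\}$ and then verifying that the extra points $\b_1, \ldots, \b_m$ lie strictly inside that ellipsoid, so that they contribute no binding constraint when solving $\Prob(S)$. Concretely, I would solve $\Prob(\{\a_1, \ldots, \a_k\})$ in closed form, verify that each $\a_i$ lies on its boundary (so is active), then check that each $\b_j$ is interior, and finally invoke uniqueness of the MVEE to transfer these conclusions to $\Prob(S)$.

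To carry out the first step, I would exploit the nonsingularity of $\M$ by the change of variables $\Y = \M^\trans \X \M$, which is a bijection on the cone of positive-definite matrices. Under this substitution the constraints $\a_i^\trans \X \a_i \le 1$ become $Y_{ii} = \e_i^\trans \Y \e_i \le 1$, and the objective becomes
\begin{equation*}
-\log \det \X = -\log \det \Y + 2\log|\det \M|.
\end{equation*}
Hadamard's inequality gives $\det \Y \le \prod_{i=1}^{k} Y_{ii} \le 1$, with equality if and only if $\Y$ is diagonal with all diagonal entries equal to $1$. Hence the unique optimizer is $\Y^* = \I$, which yields $\X^* = (\M \M^\trans)^{-1}$, and substituting back shows $\a_i^\trans \X^* \a_i = Y^*_{ii} = 1$ for each $i$, so every $\a_i$ is active in the MVEE of $\{\a_1, \ldots, \a_k\}$.

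Next I would check that the additional points $\b_j$ lie strictly inside $H(\{\a_1, \ldots, \a_k\})$. Writing $\b_j = \M \c_j$ with $\|\c_j\|_2 < 1$,
\begin{equation*}
\b_j^\trans \X^* \b_j = \c_j^\trans \M^\trans (\M \M^\trans)^{-1} \M \c_j = \c_j^\trans \c_j = \|\c_j\|_2^2 < 1,
\end{equation*}
since $\M$ being square and nonsingular implies $\M^\trans (\M\M^\trans)^{-1}\M = \I$. Thus $\X^*$ is feasible for the larger problem $\Prob(S)$. Because any feasible point of $\Prob(S)$ is also feasible for $\Prob(\{\a_1, \ldots, \a_k\})$ and the latter is uniquely minimized at $\X^*$ (by strict convexity of $-\log\det$), $\X^*$ is the unique minimizer of $\Prob(S)$ as well. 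Therefore $H(S) = H(\{\a_1, \ldots, \a_k\})$, whose active points are exactly $\a_1, \ldots, \a_k$.

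The main obstacle I anticipate is justifying the clean closed-form solution of the MVEE for $\{\a_1, \ldots, \a_k\}$: one must argue the change of variables is legitimate and that Hadamard's inequality characterizes the unique optimum, both of which require the nonsingularity of $\M$ and the strict concavity of $\log\det$ on positive-definite matrices. Once these structural facts are in hand, the remaining computations, including verifying $\b_j^\trans \X^* \b_j < 1$, are routine.
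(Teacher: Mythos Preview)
Your proof is correct and lands on the same optimizer $\X^* = (\M\M^\trans)^{-1}$ as the paper, but the route differs. The paper simply posits $(\M\M^\trans)^{-1}$ and invokes the KKT conditions for $\Prob(S)$ to certify optimality, then reads off $\a_i^\trans \X^* \a_i = 1$ and $\b_j^\trans \X^* \b_j = \|\c_j\|_2^2 < 1$. You instead derive $\X^*$ constructively: the change of variables $\Y = \M^\trans \X \M$ together with Hadamard's inequality solves the restricted problem $\Prob(\{\a_1,\ldots,\a_k\})$ explicitly, and you then lift to $\Prob(S)$ by noting that $\X^*$ remains feasible there while the feasible set only shrinks, so strict convexity of $-\log\det$ pins down the same unique minimizer. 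Your argument is more self-contained (no dual certificate to supply) and explains where the form $(\M\M^\trans)^{-1}$ comes from; the paper's KKT verification is terser but presumes the reader can fill in the multipliers and convexity argument behind the word ``KKT.''
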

Using the Karush-Kuhn-Tucker (KKT) conditions,
we can easily check the correctness of this assertion.
For the set $S$ of the points $\a_1, \ldots, \a_k, \b_1, \ldots, \b_m$,
these conditions imply that
$(\M\M^\trans)^{-1}$ is an optimal solution for problem $\Prob(S)$.
We have $\a_i^\trans (\M\M^\trans)^{-1} \a_i = 1$ and, for any $\b \in \{\b_1, \ldots, \b_m\}$,
\begin{equation*}
 \b^\trans (\M\M^\trans)^{-1} \b =  \c^\trans \M^\trans (\M^\trans)^{-1} \M^{-1} \M \c
  = \|\c\|_2^2 < 1.
\end{equation*}
Hence, the active points of the origin-centered MVEE for $S$ are $\a_1, \ldots, \a_k$.

\begin{theo} \label{Theo: finding cluster representatives}
 Let $\P = \U\Q + \R$ be of the form shown in (\ref{Exp: P}).
 Let $\Gamma$ be the $k$-way partition of a graph corresponding to $\Q$.
 Let $H(S)$ be an origin-centered MVEE for the set of all columns of $\P$.
 If
 \begin{equation*}
  \| \R \|_2 <  \frac{1}{2} (1-\theta_{\mmax})\hat{\alpha}_{\mmin},
 \end{equation*}
 then the active index set of $H(S)$ coincides with the representative node set of $\Gamma$.
\end{theo}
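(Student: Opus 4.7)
The plan is to apply Proposition \ref{Prop: active points} with the representative columns $\p_{1,n_1}, \ldots, \p_{k,n_k}$ in the role of $\a_1, \ldots, \a_k$ and the remaining columns of $\P$ in the role of the $\b_j$. Form the matrix $\M = [\p_{1,n_1}, \ldots, \p_{k,n_k}] \in \Real^{k \times k}$ and, setting $\Lambdab = \mbox{diag}(\alpha_1^*, \ldots, \alpha_k^*)$ and $\R^* = [\r_{1,n_1}, \ldots, \r_{k,n_k}]$, decompose $\M = \U \Lambdab + \R^* = \U \Lambdab (\I + \Lambdab^{-1} \U^\trans \R^*)$. Since $\U$ is orthogonal and $\|\R^*\|_2 \le \|\R\|_2$, one has $\|\Lambdab^{-1}\U^\trans \R^*\|_2 \le \|\R\|_2/\alpha_{\mmin}^*$, and the hypothesis forces this to be strictly below $(1-\theta_{\mmax})/2 \le 1/2$, so $\M$ is invertible and the first hypothesis of Proposition \ref{Prop: active points} holds.

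Next, I verify the second hypothesis: for every non-representative column $\p_{i,j}$ (with $j < n_i$), I must find $\c \in \Real^k$ with $\p_{i,j} = \M \c$ and $\|\c\|_2 < 1$. Taking $\c = \M^{-1} \p_{i,j}$ and using $\p_{i,j} = \alpha_{i,j} \u_i + \r_{i,j}$ together with $\U^\trans \u_i = \e_i$ and $\alpha_{i,j}/\alpha_i^* = \theta_{i,j}$ gives
\begin{equation*}
 \c = (\I + \Lambdab^{-1} \U^\trans \R^*)^{-1} \bigl( \theta_{i,j} \e_i + \Lambdab^{-1} \U^\trans \r_{i,j} \bigr).
\end{equation*}
Writing $t := \|\R\|_2/\alpha_{\mmin}^*$, the Neumann series estimate $\|(\I + \Lambdab^{-1}\U^\trans\R^*)^{-1}\|_2 \le 1/(1-t)$ combined with $\|\r_{i,j}\|_2 \le \|\R\|_2$ and $\theta_{i,j} \le \theta_{\mmax}$ yields
\begin{equation*}
 \|\c\|_2 \le \frac{\theta_{\mmax} + t}{1 - t}.
\end{equation*}
This is strictly less than one exactly when $\theta_{\mmax} + 2t < 1$, which is the hypothesis $\|\R\|_2 < \tfrac{1}{2}(1-\theta_{\mmax})\alpha_{\mmin}^*$. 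Proposition \ref{Prop: active points} then identifies the active index set of the origin-centered MVEE for the columns of $\P$ as $\{(1,n_1), \ldots, (k,n_k)\}$, which is the representative set of $\Gamma$.

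The main obstacle, I expect, is extracting the sharp constant $\tfrac{1}{2}$ in the hypothesis. A more naive route, such as bounding $\|\M^{-1}\|_2 \le 1/(\alpha_{\mmin}^* - \|\R\|_2)$ and then splitting $\p_{i,j} = \theta_{i,j} \p_{i,n_i} + (\r_{i,j} - \theta_{i,j} \r_{i,n_i})$, produces an estimate of the form $\theta_{\mmax} + 2t/(1-t)$, which requires a hypothesis roughly twice as strong to give $\|\c\|_2 < 1$. Keeping the correction $(\I + \Lambdab^{-1}\U^\trans\R^*)^{-1}$ outside and observing that the ``signal'' coefficient $\theta_{i,j}$ and the perturbation $t$ share the same denominator $1-t$ is precisely what reduces the target inequality to the clean condition $\theta_{\mmax} + 2t < 1$ matching the stated bound.
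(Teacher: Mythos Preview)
Your proof is correct. Both you and the paper apply Proposition~\ref{Prop: active points} with $\M = [\p_{1,n_1},\ldots,\p_{k,n_k}]$, but the norm estimates differ. The paper uses the singular-value perturbation bound $\sigma_{\mmin}(\M) \ge \alpha_{\mmin}^* - \|\R\|_2$ (Lemma~\ref{Lemm: singular value perturbation}) together with the splitting $\c = \theta_{i,j}\e_i + \M^{-1}(\r_{i,j} - \theta_{i,j}\r_{i,n_i})$, arriving at $\|\c\|_2 \le \theta_{\mmax} + (1+\theta_{\mmax})t/(1-t)$ with $t = \|\R\|_2/\alpha_{\mmin}^*$; a short calculation shows this is $<1$ exactly when $2t < 1-\theta_{\mmax}$, the same threshold you obtain. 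Your factorisation $\M = \U\Lambdab(\I+\Lambdab^{-1}\U^\trans\R^*)$ with a Neumann-series bound is arguably cleaner, since the condition $(\theta_{\mmax}+t)/(1-t)<1$ rearranges to $2t<1-\theta_{\mmax}$ at a glance. One correction to your commentary: the ``naive'' splitting you dismiss is precisely the paper's route, and it does \emph{not} lose the sharp constant---the paper keeps $\|\r_{i,j}-\theta_{i,j}\r_{i,n_i}\|_2 \le (1+\theta_{\mmax})\|\R\|_2$ rather than the cruder $2\|\R\|_2$, and the factor $(1+\theta_{\mmax})$ cancels against the same factor hidden in $\sigma_{\mmin}(\M) > \tfrac12(1+\theta_{\mmax})\alpha_{\mmin}^*$.
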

Recall that $\theta_{i,j}$ and $\alpha_{i,j}$ are defined as in (\ref{Exp: alpha}) and (\ref{Exp: theta}), respectively;
in the theorem above, $\theta_{\mmax}$ is the largest among $\theta_{i,j}$ for $i=1,\ldots, k, \ j=1, \ldots, n_i-1$, 
and $\hat{\alpha}_{\mmin}$ is the smallest among $\alpha_{i, n_i}$ for $i=1, \ldots, k$.
The proof of this theorem relies on the techniques used to prove  Theorem 9 of \cite{Miz14}, 
which gives the robustness of ER to noise. 
However, as we will see in Section \ref{Sec: connection with separable NMFs}, 
Theorem \ref{Theo: finding cluster representatives} does not directly follow from it.
The proof is thus presented.
As a lemma for proving the theorem, 
we use a classical result regarding singular value perturbations.
For a matrix $\A$,
the symbol $\sigma_i(\A)$ denotes the $i$th smallest singular value of $\A$.
In particular, the symbol $\sigma_{\mmin}(\A)$ denotes
the smallest singular value, i.e., $\sigma_1(\A)$.

\begin{lemm}[See, for instance, Corollary 8.6.2 of \cite{Gol13}]
 \label{Lemm: singular value perturbation}
 Let $\A \in \Real^{k \times n}$  and $\N \in \Real^{k \times n}$. 
 We have
 \begin{equation*}
  |\sigma_i(\A + \N) - \sigma_i(\A)| \le \| \N \|_2
 \end{equation*}
 for $i = 1, \ldots, \ell$ where $\ell = \min \{k, n\}$.
\end{lemm}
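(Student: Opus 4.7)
The plan is to deduce this classical Weyl-type bound from the Courant–Fischer min–max characterization of singular values. With the paper's convention that $\sigma_i$ denotes the $i$th smallest singular value, there is a min-max identity of the form
\begin{equation*}
\sigma_i(\A) = \min_{\substack{V \subset \Real^n \\ \dim V = d_i}} \max_{\substack{\x \in V \\ \|\x\|_2 = 1}} \|\A \x\|_2,
\end{equation*}
where the admissible subspace dimension $d_i$ depends only on $i$, $k$, and $n$ and not on the particular matrix. This variational formula reduces the problem to comparing $\|\A\x\|_2$ and $\|(\A+\N)\x\|_2$ uniformly over unit vectors $\x$.

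The key step is the uniform triangle-inequality bound
\begin{equation*}
\bigl|\|(\A+\N)\x\|_2 - \|\A\x\|_2\bigr| \le \|\N\x\|_2 \le \|\N\|_2
\end{equation*}
for every unit $\x \in \Real^n$, which is immediate from the reverse triangle inequality and the definition of the operator norm. Letting $V^\ast$ be a subspace of dimension $d_i$ attaining the outer minimum in the formula for $\sigma_i(\A)$, I would then chain
\begin{equation*}
\sigma_i(\A+\N) \le \max_{\x \in V^\ast, \|\x\|_2 = 1} \|(\A+\N)\x\|_2 \le \max_{\x \in V^\ast, \|\x\|_2 = 1} \|\A\x\|_2 + \|\N\|_2 = \sigma_i(\A) + \|\N\|_2,
\end{equation*}
using that $V^\ast$ is admissible in the min–max formula for $\sigma_i(\A+\N)$. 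Writing $\A = (\A+\N) + (-\N)$ and noting $\|-\N\|_2 = \|\N\|_2$ then gives the reverse inequality, and together the two yield the stated bound.

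The main obstacle is really just bookkeeping around the ordering convention. Standard textbooks typically number singular values largest-first, so one must select the variant of Courant–Fischer (a min over $d_i$-dimensional subspaces of a max, rather than a max over complementary-dimensional subspaces of a min) that is compatible with the paper's smallest-first indexing; picking the wrong flavor would land the perturbation on a differently indexed singular value. If that bookkeeping became awkward, a clean fallback would be to pass to the Hermitian dilation
\begin{equation*}
\tilde{\A} = \begin{bmatrix} \zero & \A \\ \A^\trans & \zero \end{bmatrix},
\end{equation*}
whose eigenvalues are exactly $\pm \sigma_j(\A)$, and then invoke Weyl's eigenvalue perturbation inequality for symmetric matrices applied to $\tilde{\A}$ and $\tilde{\A}+\tilde{\N}$; since the operator norm of the dilation satisfies $\|\tilde{\N}\|_2 = \|\N\|_2$, this yields the same estimate without any direct appeal to min–max subspace arguments.
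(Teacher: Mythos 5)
Your proof is correct, but there is nothing in the paper to compare it against: the paper does not prove this lemma at all. It is imported verbatim as a classical fact, with the bracketed attribution ``See, for instance, Corollary 8.6.2 of \cite{Gol13}'' (Golub and Van Loan) serving in place of a proof, and it is then used as a black box in the proof of Theorem \ref{Theo: finding cluster representatives}. Your argument is a standard and complete derivation of that classical fact. The min--max route is sound: with the paper's smallest-first convention one has $\sigma_i(\A) = \min_{\dim V = n-\ell+i} \max_{\x \in V,\, \|\x\|_2=1} \|\A\x\|_2$ (equivalently, apply Courant--Fischer to the $(n-\ell+i)$th smallest eigenvalue of $\A^\trans\A$), the subspace dimension depends only on $i$, $k$, $n$ as you claim, and the uniform bound $\bigl|\|(\A+\N)\x\|_2 - \|\A\x\|_2\bigr| \le \|\N\|_2$ over unit vectors, combined with choosing the optimal subspace for $\A$ and then symmetrizing via $\A = (\A+\N) + (-\N)$, gives both directions of the inequality. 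You are also right that the indexing convention is immaterial here, since the bound holds for every index simultaneously and is invariant under reversing the order. The Hermitian-dilation fallback via Weyl's inequality is equally valid and is essentially the proof given in Golub and Van Loan. In short: correct proof, supplied for a statement the paper deliberately leaves unproved.
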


Let us prove Theorem \ref{Theo: finding cluster representatives}.
\begin{proof}[Proof of Theorem \ref{Theo: finding cluster representatives}]
 We use $\p_{i,j}$ to refer to the columns of $\P$.
 Let $\M = [\p_{1,n_1}, \ldots, \p_{k,n_k}] \in \Real^{k \times k}$ and
 $\p \in \Real^k$ be a vector arbitrarily chosen
 among the vectors $\p_{i,j}$ with $j \notin \{n_1, \ldots, n_k\}$.
 From Proposition \ref{Prop: active points},
 it is sufficient to prove that $\M$ is nonsingular and
 there exists some vector $\c \in \Real^k$ such that $\p = \M\c$ and $\|\c\|_2 < 1$.

 We use $\q_{i,j}$ to refer to the columns of $\Q$.
 Since $\q_{i,j} = \alpha_{i,j} \e_i$ as shown in (\ref{Eq: expression of p and q}),  
 we can write $\M$ as
 \begin{equation*}
  \M = \U \V + \R'
 \end{equation*}
 by letting $\V = \mbox{diag}(\alpha_{1,n_1}, \ldots, \alpha_{k,n_k}) \in \Real^{k \times k}$ and
 $\R' = [\r_{1,n_1}, \ldots, \r_{k, n_k}] \in \Real^{k \times k}$.
 It follows from Lemma \ref{Lemm: singular value perturbation} that
 \begin{equation*}
  |\sigma_{i}(\M) - \sigma_i(\U\V) | = |\sigma_{i}(\M) - \sigma_i(\V) | \le \|\R'\|_2 \le \|\R\|_2  
 \end{equation*}
 for $i=1,\ldots,k$.
 Here, the equality comes from that $\U$ is orthogonal,
 and the second inequality comes from that $\R'$ is a submatrix of $\R$.
 Hence, we have $\sigma_{\mmin}(\M) \ge  \sigma_{\mmin}(\V) - \|\R\|_2 = \hat{\alpha}_{\mmin} - \|\R\|_2$.
 From the bound on $\|\R\|_2$ imposed in this theorem, the inequality implies 
\begin{equation} \label{Exp: bound on the smallest singular value of S}
 \sigma_{\mmin}(\M) > \frac{1}{2} (1 + \theta_{\mmax}) \hat{\alpha}_{\mmin}.
\end{equation}
Since $\hat{\alpha}_{\mmin}$ and $\theta_{\mmax}$ are positive,
so is $\sigma_{\mmin}(\M)$.
Accordingly, $\M$ is nonsingular.

Let $\r = \r_{i,j} - \theta_{i,j} \r_{i, n_i} \in \Real^k$, and
set a vector $\c \in \Real^k$ as $\c = \theta_{i,j} \e_i + \M^{-1} \r$.
We have $\p = \p_{i,j} = \M\c$, since
\begin{align*}
 \M\c  
 & = \M (\theta_{i,j} \e_i + \M^{-1} \r) \\
 & = \alpha_{i,j} \u_i + \theta_{i,j} \r_{i, n_i} + \r \\
 & = \alpha_{i,j} \u_i + \r_{i,j} \\
 & = \p_{i,j}.
\end{align*}
 We can bound $\|\c\|_2$ as 
\begin{equation} \label{Exp: bound on norm of c}
 \|\c\|_2
  = \| \theta_{i,j} \e_i + \M^{-1} \r \|_2 
  \le \theta_{\mmax} + \frac{ \| \r \|_2 }{ \sigma_{\mmin}(\M)},
\end{equation}
 and  $\| \r \|_2$ as 
\begin{align} \label{Exp: bound on tilde r}
 \|\r\|_2
 &= \|\r_{i,j} - \theta_{i,j} \r_{i, n_i}\|_2  \nonumber \\
 &\le (1+\theta_{\mmax}) \|\R\|_2 \nonumber \\
 &< \frac{1}{2} (1 - \theta_{\mmax}^2) \hat{\alpha}_{\mmin}. 
\end{align}
  The last inequality comes from the bound on $\|\R\|_2$ in this theorem.
Accordingly,
from inequalities (\ref{Exp: bound on the smallest singular value of S}), (\ref{Exp: bound on norm of c})
and (\ref{Exp: bound on tilde r}), we have $\|\c\|_2 < 1$.

\end{proof}

\subsection{Step 3} \label{Subsec: analysis of Step 3}
Let us move on to the analysis of Step 3.

\begin{theo} \label{Theo: cluster extraction}
 Let $\P = \U\Q + \R$ be of the form shown in (\ref{Exp: P}).
 Let $\Gamma = \{S_1, \ldots, S_k\}$ be the $k$-way partition of a graph corresponding to $\Q$.
 For the columns $\p_i$ of $\P$, take the normalized ones $\bar{\p}_i = \p_i / \|\p_i\|_2$.
 Assume that we have an element $u_i$ in $S_i$ for every $i = 1, \ldots, k$.
 Pick an element $v$ from $S_j$.
 If the $\ell$th column $\r_\ell$  of $\R$ satisfies 
 \begin{equation*}
  \|\r_\ell \|_2 < (17 - 12\sqrt{2}) \alpha_{\mmin}
 \end{equation*}
 for  $\ell = 1,\ldots,n$,
 then the chosen $i^* = \arg \max_{i=1, \ldots, k} \bar{\p}_{u_i}^\trans \bar{\p}_v$
 satisfies  $i^* = j$.
\end{theo}
Recall that $\alpha_{\mmin}$ is the smallest among all $\alpha_{i,j}$ for $i=1, \ldots, k, \ j=1, \ldots, n_i$.
Before going to the proof,  we derive the range of $\|\R\|_2$ that covers the ranges
imposed in  Theorems \ref{Theo: finding cluster representatives} and \ref{Theo: cluster extraction}.
From the definition,
we have $\alpha_{\mmin} = \min_{i=1,\ldots,k}\alpha_{i,1}$.
In addition, $\alpha_{i,1}$ can be written as
$\alpha_{i,1} = (\alpha_{i,1} / \alpha_{i,n_i}) \cdot \alpha_{i,n_i} = \theta_{i,1} \alpha_{i,n_i}$.
Hence, $\alpha_{\mmin} \ge \theta_{\mmin} \hat{\alpha}_{\mmin}$ holds.
Accordingly, if $\|\R\|_2 < (17-12\sqrt{2})\theta_{\mmin} \hat{\alpha}_{\mmin}$,
we see that $\|\r_\ell\|_2$ covers the range imposed in Theorem \ref{Theo: cluster extraction},
since
\begin{equation*}
 \|\r_\ell\|_2 \le \|\R\|_2 < (17-12\sqrt{2})\theta_{\mmin} \hat{\alpha}_{\mmin} \le (17-12\sqrt{2})\alpha_{\mmin}
\end{equation*}
for $\ell = 1,\ldots,n$.
Consequently, Theorems \ref{Theo: finding cluster representatives} and \ref{Theo: cluster extraction} imply that
we can identify a graph partition from the columns of $\P$, 
if
\begin{equation} \label{Exp: condition on R}
 \|\R\|_2 < \hat{\alpha}_{\mmin} \cdot \min \biggl\{ \frac{1}{2} (1-\theta_{\mmax}), \ (17-12\sqrt{2})\theta_{\mmin} \biggr\}.
\end{equation}

Now let us prove Theorem \ref{Theo: cluster extraction}.
First, we build Propositions \ref{Prop: ith element of bar z} and \ref{Prop: range of omega}.
In what follows, we use $\p_{i,j}$ to refer to the columns of $\P$.
Let 
\begin{equation*}
 \z_{i, j} := \U^\trans \p_{i,j} \in \Real^k
  \quad \mbox{and} \quad
  \bar{\z}_{i,j} := \frac{\z_{i,j}}{\|\z_{i,j}\|_2} \in \Real^k.
\end{equation*}
Since  $\p_{i,j} = \alpha_{i,j}\u_i + \r_{i,j}$,
we can express $\z_{i, j}$ as
\begin{equation*} 
\z_{i,j} = \alpha_{i,j} \e_{i} + \n_{i,j}
\end{equation*}
by letting $\n_{i,j} := \U^\trans \r_{i,j} \in \Real^k$.
Since $\U$ is orthogonal,
$\bar{\p}_{i,j}^\trans \bar{\p}_{u,v} = \bar{\z}_{i,j}^\trans \bar{\z}_{u,v}$ and 
$ \|\n_{i,j}\|_2 = \|\r_{i,j}\|_2 $.
We will examine the inner product of $\bar{\z}_{i,j}$ and $\bar{\z}_{u,v}$
instead of the one  of $\bar{\p}_{i,j}$ and $\bar{\p}_{u,v}$.
In the proposition below, we use $(\a)_i$ to denote the $i$th element of the vector $\a$.

\begin{prop}\label{Prop: ith element of bar z}
 Let $\bar{\z}_{i,j}$ be defined as above.
 The $i$th element is bounded from below:
\begin{equation*}
  (\bar{\z}_{i,j})_i \ge \frac{\alpha_{\mmin} - \|\n_{i,j}\|_2 }{\alpha_{\mmin} + \|\n_{i,j}\|_2 }.
 \end{equation*}
\end{prop}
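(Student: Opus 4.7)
The plan is to unpack the definition of $\bar{\z}_{i,j}$, use triangle-inequality bounds on the numerator and denominator, and then exploit a one-variable monotonicity to pass from $\alpha_{i,j}$ down to $\alpha_{\mmin}$. From the definition $\z_{i,j} = \alpha_{i,j}\e_i + \n_{i,j}$, the $i$th coordinate of $\z_{i,j}$ is $\alpha_{i,j} + (\n_{i,j})_i$, so
\begin{equation*}
(\bar{\z}_{i,j})_i = \frac{\alpha_{i,j} + (\n_{i,j})_i}{\|\z_{i,j}\|_2}.
\end{equation*}
My strategy is to lower-bound the numerator and upper-bound the denominator, both by quantities involving only $\alpha_{i,j}$ and $t := \|\n_{i,j}\|_2$.

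First I would note $|(\n_{i,j})_i| \le \|\n_{i,j}\|_2 = t$, whence $\alpha_{i,j} + (\n_{i,j})_i \ge \alpha_{i,j} - t$. By the triangle inequality applied to $\z_{i,j} = \alpha_{i,j}\e_i + \n_{i,j}$, I get $\|\z_{i,j}\|_2 \le \alpha_{i,j}\|\e_i\|_2 + \|\n_{i,j}\|_2 = \alpha_{i,j} + t$. Under the implicit regime in which this proposition is used (namely $t < \alpha_{\mmin} \le \alpha_{i,j}$, which is exactly what the enclosing hypothesis $\|\r_i\|_2 < (17-12\sqrt{2})\alpha_{\mmin}$ in Theorem \ref{Theo: cluster extraction} guarantees, together with $\|\n_{i,j}\|_2 = \|\r_{i,j}\|_2$ since $\U$ is orthogonal), the numerator $\alpha_{i,j} - t$ is nonnegative, and dividing the two bounds gives
\begin{equation*}
(\bar{\z}_{i,j})_i \ge \frac{\alpha_{i,j} - t}{\alpha_{i,j} + t}.
\end{equation*}

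The final step is to pass from $\alpha_{i,j}$ to $\alpha_{\mmin}$. Consider the function $g(x) = (x-t)/(x+t)$ on $x > 0$ with $t \ge 0$ fixed. A one-line computation $g'(x) = 2t/(x+t)^2 \ge 0$ shows $g$ is nondecreasing. Since $\alpha_{i,j} \ge \alpha_{\mmin}$ by definition of $\alpha_{\mmin}$, I conclude
\begin{equation*}
(\bar{\z}_{i,j})_i \ge g(\alpha_{i,j}) \ge g(\alpha_{\mmin}) = \frac{\alpha_{\mmin} - t}{\alpha_{\mmin} + t},
\end{equation*}
as required.

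The argument is essentially a routine triangle-inequality and monotonicity exercise; the only genuinely interesting step is recognizing that the weakening $\alpha_{i,j} \leadsto \alpha_{\mmin}$ should be carried out through the monotone scalar function $g$ rather than directly in the numerator or denominator separately (which would degrade the bound). I do not anticipate any substantive obstacle beyond keeping track of the sign of $\alpha_{i,j}-t$ to justify the directionality of the inequality when dividing.
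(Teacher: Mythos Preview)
Your proof is correct and follows essentially the same route as the paper: bound the numerator below by $\alpha_{i,j}-\|\n_{i,j}\|_2$, bound the denominator above by $\alpha_{i,j}+\|\n_{i,j}\|_2$, and then use monotonicity of $x\mapsto (x-t)/(x+t)$ to replace $\alpha_{i,j}$ by $\alpha_{\mmin}$. The only cosmetic difference is that the paper obtains the denominator bound by expanding $\|\alpha_{i,j}\e_i+\n_{i,j}\|_2^2$ rather than invoking the triangle inequality directly; both give the same bound. Your explicit remark that the regime $t<\alpha_{\mmin}\le\alpha_{i,j}$ is needed to keep the numerator nonnegative (so that dividing the two one-sided bounds preserves the inequality) is in fact a point the paper glosses over, so you are being slightly more careful here.
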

\begin{proof}
 From the definition, 
\begin{equation*}
 \bar{\z}_{i,j}
  = \frac{\z_{i,j}}{\|\z_{i,j}\|_2}
  = \frac{\alpha_{i,j} \e_{i} + \n_{i,j}}{\|\alpha_{i,j} \e_{i} + \n_{i,j}\|_2}.
\end{equation*}
 Hence, the $i$th element of $\bar{\z}_{i,j}$ is given as 
\begin{equation*}
 (\bar{\z}_{i,j})_i = \frac{\alpha_{i,j} + (\n_{i,j})_i}{\|\alpha_{i,j} \e_{i} + \n_{i,j}\|_2}.
\end{equation*}
 Since $|(\n_{i,j})_i| \le \|\n_{i,j}\|_{\infty} \le \|\n_{i,j}\|_2$ holds, 
 we have, for the numerator, $\alpha_{i,j} + (\n_{i,j})_i \ge \alpha_{i,j} - \|\n_{i,j}\|_2$.
 Also, we have, for the square of the denominator, 
 \begin{align*}
  \|\alpha_{i,j} \e_{i} + \n_{i,j}\|_2^2
  &= \alpha_{i,j}^2 + 2 \alpha_{i,j} (\n_{i,j})_i + \|\n_{i,j}\|_2^2 \\
  &\le \alpha_{i,j}^2 + 2 \alpha_{i,j} \|\n_{i,j}\|_2 + \|\n_{i,j}\|_2^2 \\
  &= (\alpha_{i,j} + \|\n_{i,j}\|_2)^2.
 \end{align*}
 This means $\|\alpha_{i,j} \e_{i} + \n_{i,j}\|_2 \le \alpha_{i,j} + \|\n_{i,j}\|_2$.
Accordingly, the $i$th element of $\bar{\z}_{i,j}$ is bounded from below as follows.
\begin{equation*}
 (\bar{\z}_{i,j})_i \ge \frac{\alpha_{i,j} - \|\n_{i,j}\|_2}{\alpha_{i,j} + \|\n_{i,j}\|_2}.
\end{equation*}
For some nonnegative real number $c$,
the function $f(x) = \frac{x - c}{x + c}$ for
positive real numbers $x$ is monotonically nondecreasing.
Consequently, since $\alpha_{i,j} \ge \alpha_{\mmin}$, we obtain
\begin{equation*}
 (\bar{\z}_{i,j})_i 
  \ge \frac{\alpha_{i,j} - \|\n_{i,j}\|_2}{\alpha_{i,j} + \|\n_{i,j}\|_2}
  \ge \frac{\alpha_{\mmin} - \|\n_{i,j}\|_2}{\alpha_{\mmin} + \|\n_{i,j}\|_2}.
\end{equation*}
\end{proof}

For some real number $\xi$ satisfying $0 \le \xi \le 1$, let 
\begin{equation*}
 C(i, \xi) := \{\x \in \Real^k : \|\x\|_2 = 1, \ x_i \ge \xi  \}
\end{equation*}
where $x_i$ is the $i$th element of $\x$.
If $\|\n_{i,j} \|_2 \le \alpha_{\mmin}$, Proposition \ref{Prop: ith element of bar z} tells us that 
\begin{equation*} 
 \bar{\z}_{i,j} \in C \biggl( i, \ \frac{\alpha_{\mmin} - \|\n_{i,j}\|_2 }{\alpha_{\mmin} + \|\n_{i,j}\|_2 } \biggl).
\end{equation*}
Obviously, the inner product of two elements from the same set $C(i, \xi)$ is large,
while that of two elements from different sets $C(i, \xi)$ and $C(j, \xi)$ with $i \neq j$ is small.
In the lemma below, we present bounds on those inner products.

\begin{lemm} \label{Lemm: bound on a'b}
\begin{enumerate}[{\normalfont (a)}]
 \item Let $\a, \b \in C(i, \xi)$. Then, $\a^\trans \b \ge  2 \xi^2 - 1$.
 \item Let $\a \in C(i, \xi)$ and $\b \in C(j, \xi)$ for $i \neq j$.
       Then, $\a^\trans \b \le - \xi^2 + 2\sqrt{1 - \xi^2} + 1$.
\end{enumerate}
\end{lemm}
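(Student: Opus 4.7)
The plan is to prove both bounds by splitting each inner product along the distinguished coordinate(s) of the cone(s) and controlling the ``orthogonal remainder'' via Cauchy--Schwarz together with the norm inequality~(\ref{Eq: norm of a setminus u}) that the authors have just verified.

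For part (a), I would write
\begin{equation*}
\a^\trans \b = a_u b_u + \a_{\setminus u}^\trans \b_{\setminus u}.
\end{equation*}
Since $a_u \ge \xi \ge 0$ and $b_u \ge \xi \ge 0$, the first term is at least $\xi^2$. For the second term, Cauchy--Schwarz gives $\a_{\setminus u}^\trans \b_{\setminus u} \ge -\|\a_{\setminus u}\|_2 \|\b_{\setminus u}\|_2$, and applying (\ref{Eq: norm of a setminus u}) to each factor yields $\a_{\setminus u}^\trans \b_{\setminus u} \ge -(1 - \xi^2)$. Adding gives $\a^\trans \b \ge \xi^2 - (1 - \xi^2) = 2\xi^2 - 1$.

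For part (b), I would split off both distinguished coordinates:
\begin{equation*}
\a^\trans \b = a_u b_u + a_v b_v + \a_{\setminus u,v}^\trans \b_{\setminus u,v}.
\end{equation*}
Here $a_u \le \|\a\|_2 = 1$ while $|b_u| \le \|\b_{\setminus v}\|_2 \le \sqrt{1 - \xi^2}$ by~(\ref{Eq: norm of a setminus u}) applied to $\b \in C(v, \xi)$, so $a_u b_u \le a_u |b_u| \le \sqrt{1 - \xi^2}$; by symmetry the same bound holds for $a_v b_v$. For the remainder, Cauchy--Schwarz plus the crude domination $\|\a_{\setminus u,v}\|_2 \le \|\a_{\setminus u}\|_2 \le \sqrt{1-\xi^2}$ (and likewise for $\b$) gives $\a_{\setminus u,v}^\trans \b_{\setminus u,v} \le 1 - \xi^2$. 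Summing produces $\a^\trans \b \le 2\sqrt{1-\xi^2} + (1 - \xi^2) = -\xi^2 + 2\sqrt{1-\xi^2} + 1$.

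Neither step presents a real obstacle once the coordinate decomposition is in place; the only subtlety is in (b), where $a_v$ and $b_u$ are not sign-constrained, so I must pass through absolute values before applying $|b_u| \le \sqrt{1-\xi^2}$ rather than trying to bound $a_u b_u$ directly by a product of lower bounds. The constants $2\xi^2 - 1$ and $-\xi^2 + 2\sqrt{1-\xi^2} + 1$ match exactly what is needed later when these bounds are specialized to $\bar{\z}_{\ell_u, \cdot}$ and $\bar{\z}_j$ through Proposition~\ref{Prop: ith element of bar z} to establish Theorem~\ref{Theo: cluster extraction}.
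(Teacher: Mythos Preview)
Your proof is correct and follows the paper's argument essentially verbatim: the same coordinate decompositions, the same use of Cauchy--Schwarz and inequality~(\ref{Eq: norm of a setminus u}), and the same term-by-term bounds. Your explicit passage through $|b_u|$ in part~(b) is, if anything, slightly more careful about signs than the paper's write-up, but the underlying reasoning is identical.
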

It is easy to prove this lemma. We thus put the proof in the Appendix.
The proposition below immediately follows from the lemma.
\begin{prop} \label{Prop: range of omega}
 Assume that we have an element $\a_i$ in $C(i, \xi)$ for every $i = 1,\ldots,k$.
 Pick an element $\a$ from $C(j, \xi)$.
 If $\xi > \frac{2\sqrt{2}}{3}$, then
 the chosen $i^* = \arg \max_{i = 1,\ldots,k} \a_i^\trans \a$ satisfies $i^* = j$.
 \end{prop}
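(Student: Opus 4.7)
The plan is to combine the two parts of the preceding lemma directly. Since both $\a_v$ and $\a$ lie in $C(v,\xi)$, part (a) of Lemma \ref{Lemm: bound on a'b} gives the lower bound $\a_v^\trans \a \ge 2\xi^2 - 1$. For every $u \neq v$, the pair $(\a_u, \a)$ satisfies the hypothesis of part (b), so $\a_u^\trans \a \le -\xi^2 + 2\sqrt{1-\xi^2} + 1$. Thus it suffices to verify the strict inequality
\begin{equation*}
 2\xi^2 - 1 \ > \ -\xi^2 + 2\sqrt{1-\xi^2} + 1
\end{equation*}
under the assumption $\xi > \frac{2\sqrt{2}}{3}$; this will force $\a_v^\trans \a > \a_u^\trans \a$ for all $u \neq v$, so that $\tilde{u} = \arg\max_u \a_u^\trans \a$ is uniquely $v$.

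Rearranging, the inequality reduces to $3\xi^2 - 2 > 2\sqrt{1-\xi^2}$. Under the hypothesis $\xi > \frac{2\sqrt{2}}{3}$ we have $\xi^2 > 8/9$, so the left-hand side satisfies $3\xi^2 - 2 > 2/3 > 0$, and both sides are strictly positive. I can therefore square both sides while preserving the strict inequality, reducing the problem to $(3\xi^2 - 2)^2 > 4(1 - \xi^2)$, which after expansion becomes $9\xi^4 - 8\xi^2 > 0$, equivalently $\xi^2(9\xi^2 - 8) > 0$. Since $\xi > 0$, this holds exactly when $\xi^2 > 8/9$, i.e., precisely under the hypothesis.

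There is essentially no obstacle here: once Lemma \ref{Lemm: bound on a'b} is in hand, the proposition is a one-line threshold calculation, and the threshold $\frac{2\sqrt{2}}{3}$ is seen to be sharp for this comparison-of-bounds argument. The only point requiring mild care is justifying the squaring step, which is why I first verify $3\xi^2 - 2 > 0$ before squaring.
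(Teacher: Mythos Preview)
Your proof is correct and follows essentially the same approach as the paper: both apply Lemma~\ref{Lemm: bound on a'b} to obtain the two bounds and then verify the elementary inequality $2\xi^2 - 1 > -\xi^2 + 2\sqrt{1-\xi^2} + 1$ for $\xi > \frac{2\sqrt{2}}{3}$. The only cosmetic difference is that the paper checks this by analyzing the monotone function $f(x) = 3x^2 - 2\sqrt{1-x^2} - 2$ and locating its root, whereas you rearrange and square; both are equally valid.
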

\begin{proof}
 Lemma \ref{Lemm: bound on a'b} tells us that
 $\a_i^\trans \a \ge 2 \xi^2 - 1$ if $i = j$;
 otherwise, $\a_i^\trans \a \le  - \xi^2 + 2\sqrt{1 - \xi^2} + 1$.
 Let us examine the range of $\xi$ such that
 $2\xi^2 - 1$ is strictly larger than $- \xi^2 + 2\sqrt{1 - \xi^2} + 1$.
 Let $f$ be a function defined by 
 $f(x) = 3x^2 -2 \sqrt{1 - x^2} -2$ for $0 \le x \le 1$.
 This function is monotonically increasing for $ 0 < x < 1$;
 $f(0) = -4$ and $f(1) = 1$; and 
 $f(x) = 0$ when $x = \frac{2\sqrt{2}}{3}$.
 We thus see that $f$ takes a positive value if $x > \frac{2\sqrt{2}}{3}$.
 Accordingly, if $\xi > \frac{2\sqrt{2}}{3}$,
 \begin{equation*}
  \a_j^\trans \a \ge 2 \xi^2 - 1 > - \xi^2 + 2\sqrt{1 - \xi^2} + 1 \ge \a_i^\trans \a
 \end{equation*}
 holds for every $i \in \{1, \ldots, k\} \setminus \{j\}$.
 Consequently,  the choice of $i^* = \arg \max_{i=1, \ldots, k} \a_i^\trans \a$ satisfies $i^* = j$.
\end{proof}

We are now ready to prove Theorem \ref{Theo: cluster extraction}.
\begin{proof}[Proof of Theorem \ref{Theo: cluster extraction}]
 Proposition \ref{Prop: ith element of bar z} implies 
 \begin{equation*}
  \bar{\z}_{i,j} \in C\biggl(i, \ \frac{\alpha_{\mmin} - \|\n_{i,j}\|_2}{\alpha_{\mmin} + \|\n_{i,j}\|_2} \biggr)
 \end{equation*}
 if $\|\n_{i,j}\|_2 \le \alpha_{\mmin}$.
 This theorem imposes the condition that $\|\r_{i,j}\|_2 < (17-12\sqrt{2})\alpha_{\mmin}$.
 Hence, $\bar{\z}_{i,j}$ belongs to $C(i, \xi)$  such that $\xi$ satisfies $\frac{2\sqrt{2}}{3} < \xi \le 1$, since  
 \begin{equation*}
  \frac{\alpha_{\mmin} - \|\n_{i,j}\|_2}{\alpha_{\mmin} + \|\n_{i,j}\|_2}
  = \frac{\alpha_{\mmin} - \|\r_{i,j}\|_2}{\alpha_{\mmin} + \|\r_{i,j}\|_2} > \frac{2\sqrt{2}}{3}.
\end{equation*}
 The equality comes from that $\n_{i,j} = \U^\trans \r_{i,j}$ and $\U$ is orthogonal.
 We have $\bar{\p}_{u_i}^\trans \bar{\p}_v = \bar{\z}_{u_i}^\trans \bar{\z}_v$.
 In addition, this theorem assumes that we have $u_i \in S_i$ for $i=1,\ldots,k$ and picks $v \in S_j$.
 Hence, $\bar{\z}_{u_i} \in C(i, \xi)$ for $i = 1, \ldots, k$ and $\bar{\z}_{v} \in C(j, \xi)$. 
 Accordingly, Proposition \ref{Prop: range of omega} ensures that 
 the chosen $i^* = \arg \max_{i=1,\ldots,k} \bar{\p}_{u_i}^\trans \bar{\p}_v = \bar{\z}_{u_i}^\trans \bar{\z}_v$
 satisfies  $i^* = j$.
\end{proof}

\subsection{Proofs of Theorem \ref{Theo: main result} and Corollary 
  \ref{Coro: obtained from main result}} \label{Subsec: proof of our main results}  
Theorem \ref{Theo: main result} is proved  using Theorems \ref{Theo: structure theorem},
\ref{Theo: finding cluster representatives} and \ref{Theo: cluster extraction}.
\begin{proof}[Proof of Theorem \ref{Theo: main result}]
 Let a $k$-way partition $\Gamma = \{S_1, \ldots, S_k\}$ of a graph $G$ be optimal.
 We take the normalized indicators $\bar{\g}_1, \ldots, \bar{\g}_k$ of $S_1, \ldots, S_k$.
 Let $\Q = [\bar{\g}_1, \ldots, \bar{\g}_k]^\trans \in \Real^{k \times n}$.
 Step 1 of ELLI computes the bottom $k$ eigenvectors
 $\f_1, \ldots, \f_k$ of the normalized Laplacian $\LC$ of $G$.
 Let  $\P = [\f_1, \ldots, \f_k]^\trans \in \Real^{k \times n}$.
 A graph $G$ satisfies $\Upsilon  > 4k / (\theta\alpha)^2$.
 This implies that the relation $\Upsilon \ge k$ holds,
 since  we have $0 \le \theta \alpha \le 1$ from the definitions of $\theta$ and $\alpha$.
 Hence, Theorem \ref{Theo: structure theorem} ensures that 
 there is some orthogonal matrix $\U \in \Real^{k \times k}$ such that
 $\P = \U\Q + \R$ and 
 \begin{equation*}
  \|\R\|_2 \le  2\sqrt{\frac{k}{\Upsilon}} = 2 \sqrt{\frac{k \cdot \phi_k(G)}{\lambda_{k+1}}}.  
 \end{equation*}
 Since $G $ satisfies $\Upsilon  > 4k / (\theta\alpha)^2$, we have 
 \begin{equation*}
  \Upsilon = \frac{\lambda_{k+1}}{\phi_k(G)} > \frac{4k}{(\theta\alpha)^2}
   \equivSym
   \frac{4k \cdot \phi_k(G)}{\lambda_{k+1}} < (\theta\alpha)^2.
 \end{equation*}
 This implies $\|\R\|_2 < \theta \alpha$.
 Recalling the range  shown in (\ref{Exp: condition on R})
 in Section \ref{Subsec: analysis of Step 3},
 we see that $\|\R\|_2$ lies within both ranges 
 imposed in Theorems \ref{Theo: finding cluster representatives} and \ref{Theo: cluster extraction}.
 Theorem \ref{Theo: finding cluster representatives}  ensures that the set $I$ constructed in Step 2
 is the representative node set of the optimal $k$-way partition $\Gamma$.
 Let $u_1, \ldots, u_k$ be elements of $I$ such that $u_i \in S_i$ for $i=1,\ldots,k$.
 Theorem \ref{Theo: cluster extraction} ensures that, if $v \in S_i$, then Step 3 adds $v$ to the set $T_i$.
 Consequently, the obtained set $T_i$ coincides with the cluster $S_i$
 in the optimal $k$-way partition $\Gamma$ for $i=1,\ldots,k$.
\end{proof}

Corollary \ref{Coro: obtained from main result} immediately follows from Theorem \ref{Theo: main result}.

\begin{proof}[Proof of Corollary \ref{Coro: obtained from main result}]
 The first and second conditions of Corollary \ref{Coro: obtained from main result} lead to 
\begin{align*}
 \theta_{\mmax} \le 1 - \frac{1}{\sqrt{k^p}}
 \quad \mbox{and} \quad
 \theta_{\min} \ge \frac{1}{\sqrt{k^q}}.
\end{align*}
 From the inequalities, we obtain 
 \begin{align*}
 \frac{1}{2} (1 - \theta_{\mmax}) \ge \frac{1}{2\sqrt{k^p}}  \ge \frac{1}{2\sqrt{k^p + k^q}},
 \end{align*}
 and
 \begin{align*}
  (17 - 12 \sqrt{2}) \theta_{\mmin} \ge \frac{1}{50 \sqrt{k^q}} \ge \frac{1}{50 \sqrt{k^p + k^q}}.
 \end{align*}
 Hence, we can bound $\theta$ as follows.
 \begin{align*} 
  \theta = \min \left\{ \frac{1}{2}(1 - \theta_{\mmax}), (17 - 12\sqrt{2}) \theta_{\mmin} \right\} 
  \ge \frac{1}{50 \sqrt{k^p + k^q}}. 
  \end{align*}
 Also, using the third condition of Corollary \ref{Coro: obtained from main result}, we can bound $\alpha$ as follows.
\begin{align*} 
 \alpha \ge \frac{1}{\sqrt{k^r}}.
\end{align*}
 The bounds on $\theta$ and $\alpha$ imply
 \begin{align*}
 \frac{4k}{(\theta \alpha)^2} \le 10^4 \cdot k^{r+1}(k^p + k^q) \le 2 \cdot 10^4 \cdot k^{(r+1) \cdot { \max \{p, q\}}}.
 \end{align*}
 Accordingly, if $\Upsilon = \Omega \left( k^{(r+1) \cdot { \max \{p, q\}}} \right) $, 
 Theorem \ref{Theo: main result} ensures that the output of ELLI is the optimal $k$-way partition of a graph.
\end{proof}

\section{Connection with Computing Separable NMFs} \label{Sec: connection with separable NMFs}

Finding the representative node set of a graph partition is connected with computing separable NMFs.
In what follows, we will use the symbol $\Real_+^{d \times n}$
to denote the set of \by{d}{n} nonnegative matrices.
Let a nonnegative matrix $\A \in \Real_+^{d \times n}$ have a factorization
such that $\A = \S\C$ for nonnegative matrices
$\S \in \Real_+^{d \times k}$ and $\C \in \Real_+^{k \times n}$.
This factorization is referred to as NMF.
The NMF of $\A$ is called \emph{separable} if 
it can be further factorized into
\begin{equation} \label{Exp: separable matrix} 
 \A = \S \C \ \mbox{for} \ \S \in \Real_+^{d \times k} \ \mbox{and} \ \C = [\I, \H]\Pib \in \Real_+^{k \times n}.
\end{equation}
Here, $\I$ is a \by{k}{k} identity matrix,
$\H$ is a \by{k}{(n-k)} nonnegative matrix, and $\Pib$ is an \by{n}{n} permutation matrix.
A separable NMF is a special case of an NMF that 
satisfies the further condition that 
all columns of $\S$ appear in those of $\A$.
Given $\A$ of the form shown in (\ref{Exp: separable matrix}),
a separable NMF problem is one of finding a column index set $I$ with $k$ elements such that $\A(I) = \S$.
Here, $\A(I)$ denotes the submatrix of $\A$ indexed by $I$, i.e., $\A(I) = [\a_i : i \in I]$ 
for the $i$th column $\a_i$ of $\A$.
We call such an index set $I$ the \emph{basis index set}.
The notion of separability was introduced by Donoho and Stodden \cite{Don03}.
Arora et al.\ \cite{Aro12a} showed that separable NMF problems are solvable in polynomial time.

Let $\A$ be of the form shown in (\ref{Exp: separable matrix}).
The perturbed matrix is given by $\tilde{\A} = \A + \R$ for $\R \in \Real^{k \times n}$,
which is the noise added to $\A$.
Let us choose one algorithm for solving separable NMF problems.
Given $\tilde{\A}$ and $k$, we say that the algorithm is \emph{robust to noise} 
if it finds a column index set $I$ with $k$ elements such that $\tilde{\A}(I)$ is close to $\S$.
There are several algorithms that have been shown to be robust to noise.
SPA is an algorithm for solving separable NMF problems, and Gillis and Vavasis \cite{Gil14a} examined its robustness.
They built the following setup for their analysis.
A matrix $\A \in \Real^{d \times n}$ is factorized into
\begin{equation} \label{Exp: setup by Gillis and Vavasis}
 \A = \S \C \ \mbox{for} \ \S \in \Real^{d \times k} \ \mbox{and} \ \C = [\I, \H]\Pib \in \Real_+^{k \times n}
\end{equation}
satisfying the following two conditions.
\begin{enumerate}[({A}1)]
 \item $\S$ is full column rank.
 \item Every column $\h_i$ of $\H$ satisfies $\|\h_i\|_1 \le 1$.
\end{enumerate}
In their setup, $\C$ has to be nonnegative, but $\S$ does not necessarily have to be nonnegative, 
unlike $\S$ in (\ref{Exp: separable matrix}).
Let $\A$ be of the form shown in (\ref{Exp: setup by Gillis and Vavasis}) and 
assume that it satisfies conditions (A1) and (A2).
The perturbed matrix is given by $\tilde{\A} = \A + \R$ for $\R \in \Real^{k \times n}$.
Given $\tilde{\A}$ and $k$ as input,
Gillis and Vavasis showed in Theorem 3 of \cite{Gil14a} that, 
if $\| \R \|_2$ is smaller than some threshold, 
then SPA finds a column index set $I$ such that $\tilde{\A}(I)$ is close to $\S$.
Gillis \cite{Gil14b} developed a successive nonnegative projection algorithm (SNPA),
and showed in Theorem 3.22 of the paper that a similar robustness result holds for SNPA 
even if condition (A1) is replaced with a weaker condition.
ELLI uses ER.
The robustness of ER was shown in Theorem 9 of \cite{Miz14}, 
which assumes that conditions (A1) and (A2) are satisfied, and, in addition, $\S$ is nonnegative.

Let us go back to $\P = \U\Q + \R$ shown in (\ref{Exp: P}).
By choosing an \by{n}{n} permutation matrix $\Pib$,
we can rearrange the columns of $\Q$ such that 
\begin{equation*}
 \Q = 
  \left[
  \begin{array}{ccc | ccc | c | ccc}
   \alpha_{1,n_1} &         &                  & \alpha_{1,1} & \cdots & \alpha_{1, n_1-1} &        &              &        & \\
                  &  \ddots &                  &              &        &                   & \ddots &              &        & \\
                  &         &  \alpha_{k,n_k}  &              &        &                   &        & \alpha_{k,1} & \cdots & \alpha_{k, n_k-1}
  \end{array}
  \right] \Pib \in \Real^{k \times n}
\end{equation*}
where $\alpha_{i,j} = \sqrt{d_{i,j} / \mu(S_i)}$.
Let $\V$ denote the \by{k}{k} submatrix consisting of the first $k$ columns,
$\V = \mbox{diag}(\alpha_{1,n_1}, \ldots, \alpha_{k,n_k})$.
This is nonsingular since all $d_{i,j}$ are positive.
Let $\H$ denote a matrix obtained by the product of $\V^{-1}$ and
the remaining \by{k}{(n-k)} submatrix,
\begin{equation*}
 \H =
  \left[
   \begin{array}{ccc|ccc|c|ccc}
     \theta_{1,1} & \cdots  & \theta_{1, n_1-1}
    &  &  & 
    & 
    &  &  &   \\

    &  &  &
     \theta_{2,1} & \cdots  & \theta_{2, n_2-1}    
    & 
    &  &  & \\
      &  &
    &  &  &
    & \ddots 
    &  &  & \\
      &  &
    &  &  &
    & 
    & \theta_{k,1}  & \cdots  & \theta_{k, n_k-1} \\
   \end{array}
  \right] \in \Real^{k \times (n-k)}
\end{equation*}
where $\theta_{i,j} = \alpha_{i,j} / \alpha_{i,n_i} = \sqrt{d_{i,j} / d_{i,n_i}}$.
Accordingly, $\Q$ can be rewritten as 
\begin{equation*}
 \Q = \V \C  \  \mbox{for} \ \V \in \Real_+^{k \times k} \ \mbox{and} \
 \C = [\I,\H]\Pib \in \Real_+^{k \times n}.
\end{equation*}
The above shows that $\Q = \V\C$ is NMF and separable and 
the basis index set corresponds to the representative node set of the $k$-way partition  of a graph.
We can therefore see that finding the representative node set of a graph partition is equivalent to 
finding the basis index set of $\Q$.
The grouping stage of spectral clustering needs to deal with $\P$ rather than $\Q$.
This matrix is given as 
\begin{equation*}
 \P = \U\Q + \R = \U\V\C + \R.
\end{equation*}
The matrix $\U\V$ is not necessarily nonnegative.
Nevertheless,  we can perform SPA and SNPA on $\P$ and ensure their robustness if $\|\R\|_2$ is small,
because the matrix $\U\V\C$ satisfies conditions (A1) and (A2).
Indeed, $\U\V$ is \by{k}{k} and nonsingular because $\U$ and $\V$ both are.
In addition, $\C = [\I, \H]\Pib$ is nonnegative and
every column $\h_i$ of $\H$ satisfies $\|\h_i\|_1  \le \theta_{\mmax} \le 1$.
Meanwhile, Theorem 9 of \cite{Miz14}, which describes the robustness of ER,
is invalid for $\P$, because $\U\V$ is required to be nonnegative.
\begin{remark}
In Section \ref{Subsec: analysis of Step 2}, we mentioned the size of $\|\R\|_2$.
That is, if $\Upsilon \ge k$ holds, 
then we have $\|\R\|_2 \le 2\sqrt{k / \Upsilon}$ for $\Q$ corresponding to the optimal $k$-way partition of a graph.
Hence, if the graph is well-clustered, $\|\R\|_2$ is small.
\end{remark}

\section{Experiments} \label{Sec: experiments}
We describe experiments conducted on synthetic and real data.
The details of the implementations of ELLI and KSC are as follows.

\begin{itemize}
 \item (ELLI)~Step 1 computes the bottom $k$ eigenvectors of the normalized Laplacian.
       For this computation, we used the MATLAB command \verb|eigs|, 
       choosing the value \verb|'sa'| in the input argument.
       Step 2 computes an origin-centered MVEE for the set of points.
       We used an interior-point method working within a cutting plane framework.
       Our implementation followed Algorithm 3 of \cite{Miz14}
       and used the interior-point method in the SDPT3 software package \cite{Toh99b}.

 \item (KSC)~Our implementation followed the algorithm in \cite{Lux07}
       that is referred to as normalized spectral clustering according to Shi and Malik.
       In the embedding stage, we used the MATLAB command \verb|eigs| with the same settings
       in ELLI for the eigenvector computation.
       We then constructed the spectral embedding map $F$ of the form shown in (\ref{Exp: spectral embedding map}) 
       with a scaling factor $s_u = 1 / \sqrt{d_u}$ for the degree $d_u$ of node $u$.
       In the grouping stage,
       we adopted the $k$-means++ algorithm \cite{Art07} for finding a $k$-way partition of a set $X$ of points $F(1), \ldots, F(n)$,
       since our preliminary experiments indicated that the $k$-means++ algorithm outperformed Lloyd's algorithm \cite{Llo82}.
       To perform it, we used the MATLAB command \verb|kmeans|, 
       choosing the following values in the input arguments:
       \verb|'Start'|, \verb|'plus'|, \verb|'EmptyAction'|, \verb|'singleton'|, \verb|'MaxIter'|, \verb|1000|.
       The value in the argument \verb|'MaxIter'| specifies the maximum number of iterations.
       We set it to 1000. 
	    
\end{itemize}
The experiments were conducted on 
an Intel Xeon Processor E5-1620 with 32 GB memory running MATLAB R2016a.

\subsection{Synthetic Data} \label{Subsec: synthetic data}
The first experiments assessed how close the conductance of the $k$ clusters found
by ELLI and KSC were to the $k$-way conductance $\phi_k(G)$ of the graph.
As it is hard to compute $\phi_k(G)$,
we used synthetic data  for which an upper bound on $\phi_k(G)$ is easily obtainable.
Specifically, we synthesized adjacency matrices and 
constructed the normalized Laplacians from them.

We will use the following notation to describe the generation procedure.
For integers $p$ and $q$ with $p \le q$,
the notation $[p:q]$ indicates the set of all integers
from $p$ to $q$. For instance, $[1:3] = \{1,2,3\}$.
For a matrix $\A \in \Real^{m \times n}$ with $m \ge n$,
we take a set $K = [p:q]$ satisfying $K \subset [1:n]$.
The symbol $\A_K$ denotes the submatrix $[a_{ij} : i \in K, \ j \in K]$  of $\A$,
where $a_{ij}$ is the $(i,j)$th element of $\A$,
\begin{equation*}
 \A_K = \left[
	 \begin{array}{ccc}
	  a_{pp}  & \cdots  & a_{pq} \\
	  \vdots  & \ddots  & \vdots \\
	  a_{qp}  & \cdots  & a_{qq} \\
	 \end{array}
	\right].
\end{equation*}
The following procedure was used to make the adjacency matrix.
\begin{enumerate}
 \item Choose an \by{n}{n} symmetric matrix $\M$ such that
       the diagonal elements are all $0$ and the other elements 
       lie in the interval $\{x : 0 < x < 1\}$.

 \item Choose $k$ integers $n_1, \ldots, n_k$ satisfying $n = n_1 + \cdots + n_k$ and
       construct 
       \begin{equation*}
	S_i = \Biggl[ \sum_{\ell=1}^{i-1} n_\ell + 1 :  \sum_{\ell=1}^{i} n_\ell \Biggr]
       \end{equation*}
       for $i = 1, \ldots, k$.

 \item Let $\B$ be an \by{n}{n} block diagonal matrix
       \begin{equation*}
	 \left[
	  \begin{array}{ccc}
	   \M_{S_1} &        &  \\
	   & \ddots &           \\
	   &        & \M_{S_k} \\
	  \end{array}
		     \right]
       \end{equation*}
       where $\M_{S_1}, \ldots, \M_{S_k}$ are the submatrices of $\M$ indexed by $S_1, \ldots, S_k$.
       Also, let $\R$ be an off-block diagonal matrix $\frac{1}{2} (\M - \B)$.
       Choose a value of the intensity parameter $\delta$ from $0$ to $2$
       and generate an \by{n}{n} symmetric matrix 
       $\W = \B + \delta \R$.
\end{enumerate}

The generated matrix $\W$ is regarded as the adjacency matrix for some graph $G$,
and the constructed sets $S_1, \ldots, S_k$ are clusters in the $k$-way partition of $G$.
When $\delta = 0$,
the matrix $\W$ is clean block diagonal,
and the corresponding graph $G$ consists of $k$ connected components.
As $\delta$ increases, the block structure gradually disappears.
When $\delta = 2$, the original matrix $\M$ is reacquired.
A simple calculation shows that the conductance $\phi(S_i)$ of $S_i$ is
\begin{equation*}
 \phi(S_i) = \frac{\delta}{c_i + \delta}.
 \end{equation*}
Here, $c_i$ is a positive number determined by $\M$.
Hence, $\delta / (c + \delta)$ with $c = \min \{c_1, \ldots, c_k\}$ is 
the maximum of $\phi(S_1), \ldots, \phi(S_k)$.
Hence, we can see that the $k$-way conductance $\phi_k(G)$ of $G$ 
is bounded from above by the function $f(x) = x / (c + x)$.
It may serve as a good upper bound on $\phi_k(G)$.
In particular, the bound can be tight if $\delta$ is sufficiently small.

On the basis of the above procedure,
the experiments generated two types of dataset: balanced and unbalanced.
We set $n = 10,000$ and constructed $\Gamma_1$ and $\Gamma_2$ as follows.
\begin{itemize}
 \item $\Gamma_1 = \{S_1, \ldots, S_{50}\}$ with 
	    \begin{equation*}
	      |S_1| = \cdots = |S_{50}| = 200.
	    \end{equation*}
 \item $\Gamma_2 = \{S_1, \ldots, S_{143}\}$ with 
 \begin{equation*}
  |S_1| = |S_2| = |S_3| = 1,000 \quad \mbox{and} \quad |S_4| = \cdots = |S_{143}| = 50.
 \end{equation*}
\end{itemize}
We constructed adjacency matrices with
an intensity parameter $\delta$ running from $0$ to $2$ in increments of $0.1$
for each $\Gamma_1$ and $\Gamma_2$.
The set of adjacency matrices for $\Gamma_1$ was the balanced dataset,
while that of $\Gamma_2$ was the unbalanced dataset.

The experiments ran ELLI and KSC on the normalized Laplacians produced from the datasets.
The quality of the obtained clusters was evaluated by the maximum value
of cluster conductance (MCC), defined by
\begin{equation*}
 \max\{\phi(S_1), \ldots, \phi(S_k)\}
\end{equation*}
for the output $\{ S_1, \ldots, S_k \}$ of the algorithm.
KSC repeated the $k$-means++ algorithm 100 times for each input.
Hence, the evaluation of clusters returned by KSC was
the average MCC over 100 trials.

Figure \ref{Fig: result of the first experimentsa} shows the experimental results.
The top two figures are the results of ELLI and KSC on the balanced dataset,
and the bottom two figures are those of the unbalanced dataset.
The red points in the left figures are the MCC of ELLI,
while those in the right figures are the average MCC of KSC.
The black dotted line depicts the function $f(x) = x / (c + x)$
that serves as an upper bound on graph conductance.
We can see from the figures that the MCC of ELLI approaches the upper bound, 
and it seems to be lower than the average MCC of KSC on both datasets.
Figure \ref{Fig: difference of MCC by ELLI and KSC} clarifies the differences between them
for the balanced dataset and unbalanced dataset.
Each red point plots the average MCC of KSC minus the MCC of ELLI.
We clearly see from the figures that the MCC of ELLI is consistently
below the average MCC of KSC except for $\delta = 0$.

\begin{figure}[p]
 \centering
 \includegraphics[width=\linewidth]{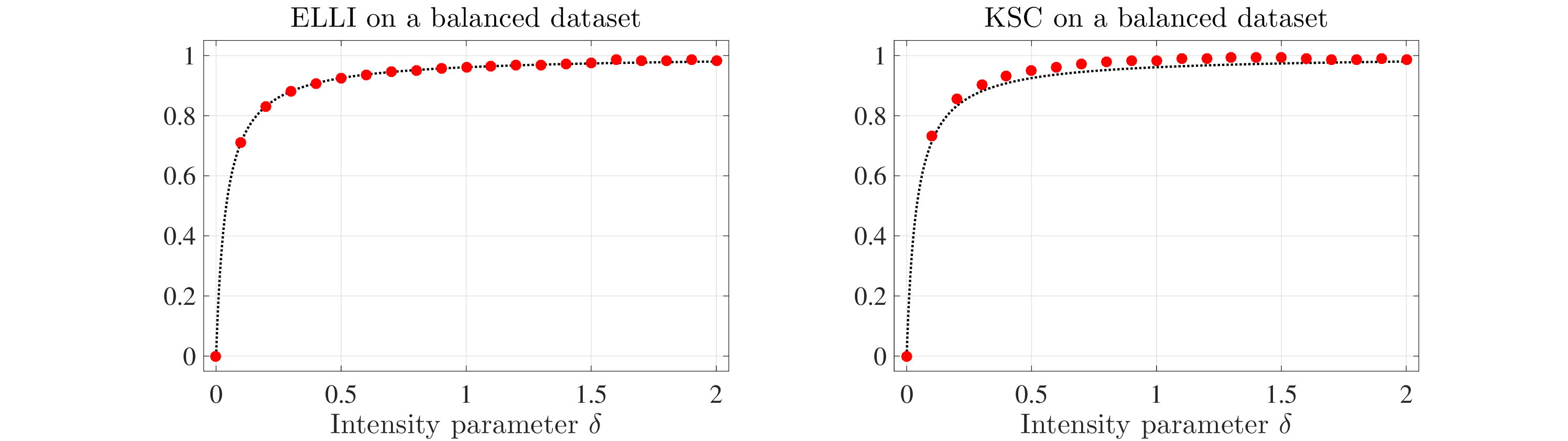}

 \bigskip 
 \includegraphics[width=\linewidth]{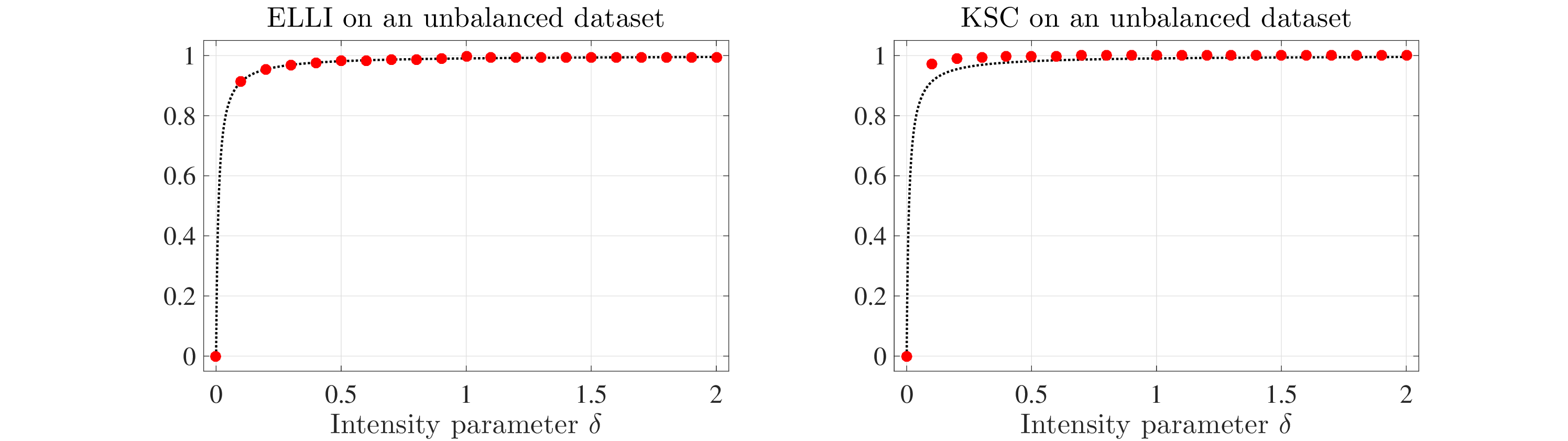}
 \caption{Results of ELLI and KSC
 for balanced and unbalanced datasets.
 The red points in the left figures are the MCCs of ELLI for
 each adjacency matrix  with $\delta$,
 while those in the right figures are the average MCCs over the 100 trials of
 the $k$-means++ algorithm for each adjacency matrix  with $\delta$.
 The black dotted line is 
 $f(x) = x / (c+x)$, an upper bound on graph conductance.}
 \label{Fig: result of the first experimentsa}
 
 \bigskip  \bigskip  \bigskip
 \includegraphics[width=\linewidth]{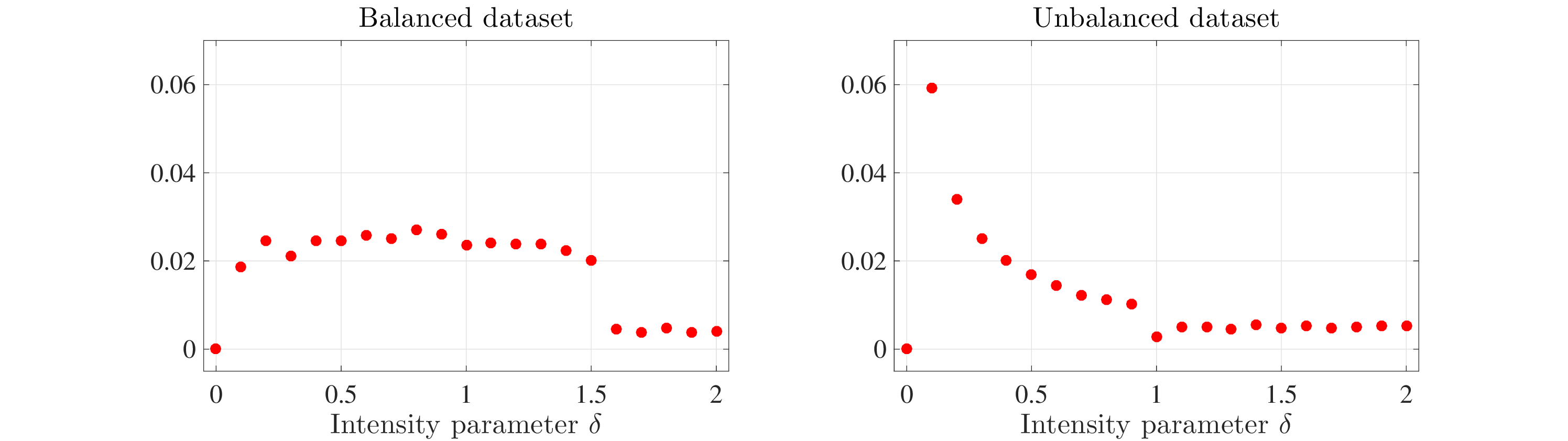}
 \caption{Difference between the MCC of ELLI and the average MCC of KSC:
 balanced dataset (left) and unbalanced dataset (right).
 Each red point is the value had by subtracting the MCC of ELLI from the average MCC of KSC.
 Thus, a red point lying on the positive side indicates
 that the MCC of ELLI is below the average MCC of KSC.}
 \label{Fig: difference of MCC by ELLI and KSC}
\end{figure}

\subsection{Real Data} \label{Subsec: real data}
The second experiments assessed how effective ELLI is at clustering real data.
We chose several image databases
containing images that had been categorized into classes by human judges.
Then, we constructed image datasets by using the whole or some parts of the databases.
We evaluated how well the clusters found by ELLI matched
the classes of the datasets.

For comparison, we tested the graph regularized NMF (GNMF) 
proposed in \cite{Cai11} and KSC.
GNMF is known to be effective at clustering.
Before describing the details of the experiments,
let us briefly describe the clustering algorithm with the use of GNMF.
Let $n$ data vectors $\a_1, \ldots, \a_n \in \Real^d$ be nonnegative.
The clustering algorithm maps the $n$ data vectors to $n$ points
in a lower dimensional space;
then it applies the $k$-means method to the points.
The mapping is constructed using GNMF.
For a nonnegative matrix $\A \in \Real^{d \times n}_+$ that stacks $\a_1, \ldots, \a_n$ in columns,
the GNMF problem is one of finding two nonnegative matrices $\X \in \Real^{d \times k}_+$
and $\Y \in \Real^{k \times n}_+$ that minimize the cost function,
\begin{equation*}
 f(\X,\Y) = \|\A - \X\Y \|_F^2 + \lambda \cdot \trace(\Y \L \Y^\trans).
\end{equation*}
Here, $\lambda$ is a positive parameter the user specifies, and
$\L$ is the Laplacian of the adjacency matrix $\W$ formed from the data vectors.
The symbol $\trace(\cdot)$ denotes the trace of a matrix.
This is an NMF problem with a regularization term $\trace(\Y \L \Y^\trans)$.
After solving it heuristically,
the clustering algorithm regards the columns $\y_1, \ldots, \y_n$ 
of $\Y$ as the representations of the data vectors $\a_1, \ldots, \a_n$ in $\Real^k$
and applies the $k$-means method to $\y_1, \ldots, \y_n$.
If two data vectors $\a_i$ and $\a_j$ are close,
so should be the corresponding two columns $\y_i$ and $\y_j$.
The regularization term serves to make $\y_i$ and $\y_j$ even closer.
Indeed, we can rewrite the term as 
\begin{equation*}
 \trace(\Y \L \Y^\trans) = \frac{1}{2} \sum_{i=1}^{n} \sum_{j=1}^{n} w_{ij} \|\y_i - \y_j \|_2^2
\end{equation*}
for the $(i,j)$th element $w_{ij}$ of the adjacency matrix $\W$.
If $\a_i$ and $\a_j$ are close together, then, $w_{ij}$ takes a high value.
Hence, we expect that the columns $\y_i$ and $\y_j$ of $\Y$
found by solving the GNMF problem are also close together.
The code for GNMF is available from the website of the first author in \cite{Cai11}.
In the experiment, we used it for computing $\Y$.
For the clustering of the columns of $\Y$,
we used the MATLAB code \verb|kmeans| with the same settings as KSC.

The experiment used five image databases: 
EMNIST \cite{Coh17},  ETL, Fashion-MNIST \cite{Xia17}, MNIST \cite{Lec98}, and NDL.
The MNIST database is a standard benchmark for evaluating clustering algorithms.
It contains the images of ten handwritten digits from $0$ to $9$.
We used all images in it.
MNIST is derived from the NIST Special Database.
The EMNIST database is an extension of MNIST
that consists of six datasets.
Among them, the EMNIST-Balanced dataset
contains the images of handwritten alphabet letters and digits.
We used this dataset.
The Fashion-MNIST database contains images of fashion products
from ten categories, such as T-shirts, trousers and pullovers.
We used all images in it.
ETL and NDL are image collections of Japanese characters.
The ETL dataset consists of all images of katakana characters
in the ETL1 dataset of the ETL Character Database,
an image collection of handwritten and machine-printed letters and digits
collected by AIST, Japan.
The NDL dataset consists of images of hiragana characters
from the image databases at the website of the 
National Diet Library of Japan\footnote{\url{http://lab.ndl.go.jp/cms/}}.
The character images were extracted from documentary materials published
from 1900 to 1977, which are available in the  National Diet Library Digital Collections.

Except for NDL,
all of the images in the datasets were grayscale.
Some of the NDL images were RGB;
we transformed them into grayscale images 
by using the MATLAB command \verb|rgb2gray|.
The sizes of the images in each dataset were equal.
The $n$ grayscale images in a dataset were represented
as vectors $\a_1, \ldots, \a_n$.
Here, given an image size of $h \times w$ pixels,
an image vector $\a_i$ is $(h \times w)$-dimensional and 
the value of each element is a grayscale intensity at the corresponding pixel.
Table \ref{Tab: image datasets} summarizes the dimension $d$ of the image vectors,
the number $n$ of images, and the number $k$ of classes in the dataset.

\begin{table}[h]
 \caption{Image datasets used in the experiments.} \label{Tab: image datasets}
 \centering
  \begin{tabular}{llrrrr} 
   \toprule
                   &  Description of images       & \multicolumn{2}{c}{Dimension $d$} & \# Data $n$  & \# Classes $k$ \\
   \midrule   
   EMNIST-Balanced &  Alphabet letters and digits & $784$    & $(28 \times 28)$ & $131,600$ & $47$ \\
   ETL             &  Katakana characters            & $4,032$  & $(64 \times 63)$ & $71,959$  & $51$ \\
   Fashion-MNIST   &  Fashion products            & $784$    & $(28 \times 28)$ & $70,000$  & $10$ \\
   MNIST           &  Digits                      & $784$    & $(28 \times 28)$ & $70,000$  & $10$ \\
   NDL             &  Hiragana characters            & $2,304$  & $(48 \times 48)$ & $80,000$  & $73$ \\
   \bottomrule
  \end{tabular}
  
\end{table}

Adjacency matrices were formed from the image vectors.
The construction was based on the procedure suggested in Section 2.2 of \cite{Lux07}.
Let $\a_1, \ldots, \a_n \in \Real^d$ be image vectors in a dataset,
and assume that $\a_1, \ldots, \a_n$ are nonnegative.
This assumption is a natural one, as 
the value of each element represents a grayscale intensity.
The similarity between $\a_i$ and $\a_j$ is evaluated using
\begin{equation*}
 s(\a_i, \a_j) = \frac{\a_i^\trans \a_j}{\|\a_i\|_2 \|\a_j\|_2}.
\end{equation*}
The value ranges from $0$ to $1$.
It is close to $1$ if $\a_i$ is nearly parallel to $\a_j$,
while it is close to $0$ if $\a_i$ and $\a_j$ are well spread.
The EMNIST-Balanced dataset contains over one-hundred-thousand images.
If we computed the similarity values for all pairs of the image vectors
and constructed an adjacency matrix using all of them,
the matrix would take up a large amount of memory.
Hence, we replaced relatively small similarity values
for some pairs of image vectors with zero.
Specifically, we chose $p$ image vectors with the highest similarity to $\a_i$
and built from them the set $N_p(\a_i)$.
We then constructed an \by{n}{n} symmetric matrix $\W$ such that
the $(i,j)$th element $w_{ij}$ is
\begin{equation*}
 w_{ij} =
  \left\{
  \begin{array}{ll}
   s(\a_i, \a_j) & \mbox{if} \ \a_i \in N_p(\a_j) \ \mbox{or} \ \a_j \in N_p(\a_i), \\
   0             & \mbox{otherwise}.
  \end{array}
  \right.
\end{equation*}
In the subsequent discussion, 
we will call $p$ the \emph{neighbor size} and 
$N_p(\a_i)$ the \emph{$p$-nearest neighbor set} of $\a_i$.

The experiments used two measures, AC and NMI, 
to evaluate how closely the clusters found
by the algorithms matched the classes of each dataset.
Here, recall that AC stands for accuracy and NMI for normalized mutual information.
We are given $n$ images indexed by integers $1, \ldots, n$ in a dataset
that have been manually classified into $k$ classes $C_1, \ldots, C_k \subset \{1, \ldots, n\}$.
For clusters $T_1, \ldots, T_k$ returned by an algorithm,
we take a permutation $\sigma: \{1,\ldots,k\} \rightarrow \{1,\ldots,k\}$ 
that maximizes $\sum_{i=1}^{k} |C_i \cap T_{\sigma(i)} |$.
AC is defined by
\begin{equation*}
 \frac{1}{n} \sum_{i=1}^{k} |C_i \cap T_{\sigma(i)} |
\end{equation*}
for the $\sigma$.
Note that the problem of finding such a permutation $\sigma$ is an assignment problem,
and it is easily solvable.
Let $\Gamma_1 = \{C_1, \ldots, C_k\}$ and $\Gamma_2 = \{T_1, \ldots, T_k\}$.
NMI is defined by
\begin{equation*}
 \frac{2 \cdot I(\Gamma_1; \Gamma_2)}{H(\Gamma_1) + H(\Gamma_2)}.
\end{equation*}
Here, $I(\Gamma_1; \Gamma_2)$ is the mutual information of $\Gamma_1$ and $\Gamma_2$,
and $H(\Gamma_1)$ and $H(\Gamma_2)$ are the entropies of $\Gamma_1$ and $\Gamma_2$.
For details,
we refer readers to Section 16.3 of the textbook \cite{Man08}.
The values of AC and NMI range from $0$ to $1$.
A higher value indicates a higher degree of matching
between clusters  and classes. 
In particular, if there is a permutation $\sigma$ such that $C_i = T_{\sigma(i)}$
for every $i=1, \ldots, k$, then AC and NMI are each $1$.
Besides AC and NMI, we measured the MCC and the elapsed time of the algorithm.

For each dataset, we constructed adjacency matrices using $p$-nearest neighbor sets
by changing the neighbor size $p \in \{10, 100, 200, 300, 400, 500\}$.
In KSC and GNMF, we repeated the $k$-means++ algorithm $100$ times for each input.

\begin{table}[p]
 \caption{AC, NMI and MCC of algorithms for each dataset in case of $p=300$: AC (top), NMI (middle) and MCC (bottom).
 The columns labeled ``Average'' list the averages of measurements
 over the $100$ trials of the $k$-means++ algorithm, and 
 those labeled ``Worst'' and ``Best'' list the worst and best measurements.}
 \label{Tab: AC, NMI and MCC in 2nd exp}
 \centering

 \begin{tabular}{l c c cc ccc cc ccc}
  \toprule
  \multicolumn{13}{c}{AC} \\
  \midrule
                    & & ELLI &  & & \multicolumn{3}{c}{KSC} &  & & \multicolumn{3}{c}{GNMF}  \\
  \cline{6-8}  \cline{11-13}
                    & &      &  & & Average & Worst & Best & & & Average & Worst & Best \\
  \midrule
  EMNIST-Balanced  & & 0.387 &  & & 0.376 & 0.343 & 0.405  &  &  & 0.087 & 0.084 & 0.090 \\ 
  ETL  & & 0.206 &  & & 0.193 & 0.184 & 0.200  &  &  & 0.030 & 0.030 & 0.031 \\ 
  Fashion-MNIST  & & 0.554 &  & & 0.527 & 0.347 & 0.565  &  &  & 0.544 & 0.524 & 0.584 \\
  MNIST  & & 0.631 &  & & 0.602 & 0.472 & 0.689  &  &  & 0.444 & 0.369 & 0.487 \\
  NDL  & & 0.779 &  & & 0.658 & 0.567 & 0.723  &  &  & 0.217 & 0.203 & 0.230 \\
  \bottomrule
 \end{tabular}

 \bigskip \bigskip
 \begin{tabular}{l c c cc ccc cc ccc}
  \toprule
  \multicolumn{13}{c}{NMI} \\
  \midrule
                    & & ELLI &  & & \multicolumn{3}{c}{KSC} &  & & \multicolumn{3}{c}{GNMF}  \\
  \cline{6-8}  \cline{11-13}
                    & &      &  & & Average & Worst & Best & & & Average & Worst & Best \\
  \midrule
  EMNIST-Balanced  & & 0.515 &  & & 0.513 & 0.502 & 0.525  &  &  & 0.167 & 0.158 & 0.175 \\ 
  ETL  & & 0.329 &  & & 0.328 & 0.319 & 0.332  &  &  & 0.010 & 0.010 & 0.010 \\
  Fashion-MNIST  & & 0.631 &  & & 0.617 & 0.509 & 0.643  &  &  & 0.623 & 0.611 & 0.638 \\ 
  MNIST  & & 0.664 &  & & 0.659 & 0.605 & 0.700  &  &  & 0.531 & 0.502 & 0.569 \\ 
  NDL  & & 0.883 &  & & 0.829 & 0.802 & 0.851  &  &  & 0.510 & 0.500 & 0.519 \\
  \bottomrule
 \end{tabular}

\bigskip \bigskip 
 \begin{tabular}{l c c cc ccc cc ccc}
  \toprule
  \multicolumn{13}{c}{MCC} \\
  \midrule
                    & & ELLI &  & & \multicolumn{3}{c}{KSC} &  & & \multicolumn{3}{c}{GNMF}  \\
  \cline{6-8}  \cline{11-13}
                    & &      &  & & Average & Worst & Best & & & Average & Worst & Best \\
  \midrule
  EMNIST-Balanced  & & 0.510 &  & & 0.567 & 0.703 & 0.495  &  &  & 0.936 & 0.942 & 0.928 \\ 
  ETL  & & 0.789 &  & & 0.725 & 0.858 & 0.694  &  &  & 0.984 & 0.993 & 0.976 \\ 
  Fashion-MNIST  & & 0.144 &  & & 0.214 & 0.543 & 0.146  &  &  & 0.276 & 0.428 & 0.191 \\ 
  MNIST  & & 0.239 &  & & 0.257 & 0.435 & 0.197  &  &  & 0.793 & 0.839 & 0.281 \\ 
  NDL  & & 0.427 &  & & 0.733 & 0.853 & 0.601  &  &  & 0.971 & 0.993 & 0.932 \\
  \bottomrule
 \end{tabular}
\end{table}

\begin{table}[h]
 \caption{Elapsed time of algorithms in seconds  for each dataset in case of $p=300$.
 The columns labeled ``KSC'' and ``GNMF'' list the averages 
 over the $100$ trials of the $k$-means++ algorithm.} 
 \label{Tab: elapsed time in 2nd exp}
 \centering

 \begin{tabular}{l rrrrrr}
  \toprule
                  &  & ELLI  &  & KSC   &  & GNMF  \\
  \midrule
  EMNIST-Balanced & & 100.0 & & 96.8 & & 567.7  \\ 
  ETL & & 53.4 & & 45.5 & & 364.4  \\
  Fashion-MNIST & & 19.3 & & 17.3 & & 59.8  \\
  MNIST & & 15.0 & & 14.1 & & 50.6  \\ 
  NDL & & 32.2 & & 31.2 & & 274.5  \\
  \bottomrule
 \end{tabular}
\end{table}

Tables \ref{Tab: AC, NMI and MCC in 2nd exp} and \ref{Tab: elapsed time in 2nd exp} show the experimental results for $p = 300$:
AC, NMI and MCC  from top to bottom in Table \ref{Tab: AC, NMI and MCC in 2nd exp} 
and elapsed time in Table \ref{Tab: elapsed time in 2nd exp}.
In KSC and GNMF, we repeated the $k$-means++ algorithm $100$ times for each input.
Table \ref{Tab: AC, NMI and MCC in 2nd exp} lists the average, worst, and best values of ACs, NMIs, and MCCs 
in the columns labeled ``Average'', ``Worst'', and ``Best''.
Table \ref{Tab: elapsed time in 2nd exp} lists the averages of the elapsed time in the columns labeled ``KSC' and ``GNMF''.
By comparing the measurements of ELLI with the averages of the measurements of KSC and GNMF,
we can make the following observations.
\begin{itemize}
 \item ELLI and KSC outperform GNMF in terms of AC and NMI,
       except in the case of KSC on Fashion-MNIST.
       The AC and NMI of ELLI are higher than the average AC and NMI of KSC.
       However, the differences are small, except for those on NDL.
 \item ELLI and KSC outperform GNMF in terms of MCC.
       The MCC of ELLI is lower than the average MCC of KSC, except for that on ETL.
 \item ELLI and KSC are faster than GNMF.
       The elapsed time of ELLI is slightly longer than the average elapsed time of KSC.
\end{itemize}
Hence, the experimental results 
imply that the AC and NMI of ELLI can reach at least the average AC and NMI of KSC.
This is apparently an advantage of ELLI over KSC.
After performing the $k$-means method multiple times in KSC,
it is necessary to appropriately select one of the outputs,
which may not be an easy task.
In fact, Table \ref{Tab: AC, NMI and MCC in 2nd exp} shows
that there is a gap between the worst and best values of ACs and NMIs of KSC.
The experimental results on synthetic and real data
also imply that ELLI will often outperform KSC in terms of MCC.

The experimental results for the cases other than $p=300$ show a similar tendency.
Figure \ref{Fig: results of 2nd exp} plots
the ACs, NMIs, and MCCs of the algorithms run on each dataset for neighbor sizes $p \in \{10, 100, 200, 300, 400, 500\}$.
We see that, even if $p$ changes in the range,  
ELLI outperforms KSC in terms of AC and NMI on NDL 
and is about equal to KSC in terms of AC and NMI on the other dataset.
Moreover, the MCC of ELLI is lower than the average MCC of KSC, except for that on ETL. 

\begin{remark}
The previous version of this paper posted on arXiv 
chosen an incorrect value of $k = 46$ for EMNIST-Balanced
during the experiments.
The current paper reports the experimental results for the dataset obtained with the correct value of $k = 47$. 
\end{remark}

\begin{figure}[p]
 \includegraphics[width=\linewidth]{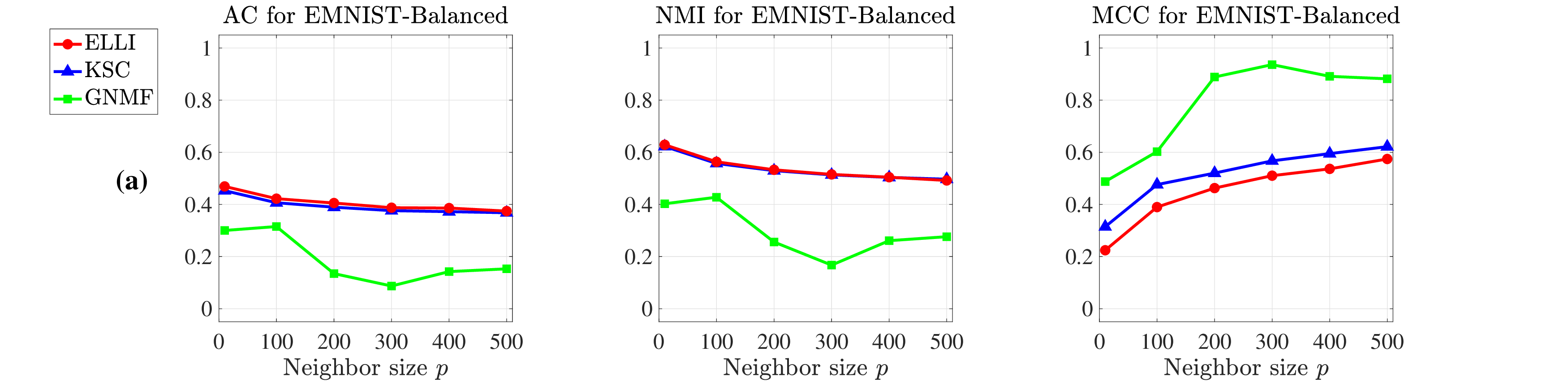}

 \smallskip
 \includegraphics[width=\linewidth]{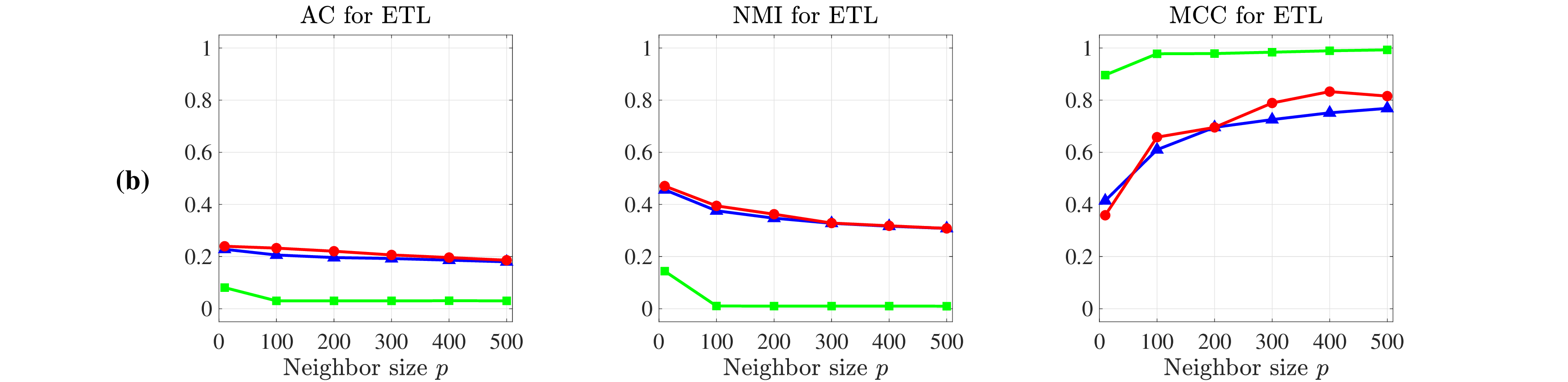}

 \smallskip
 \includegraphics[width=\linewidth]{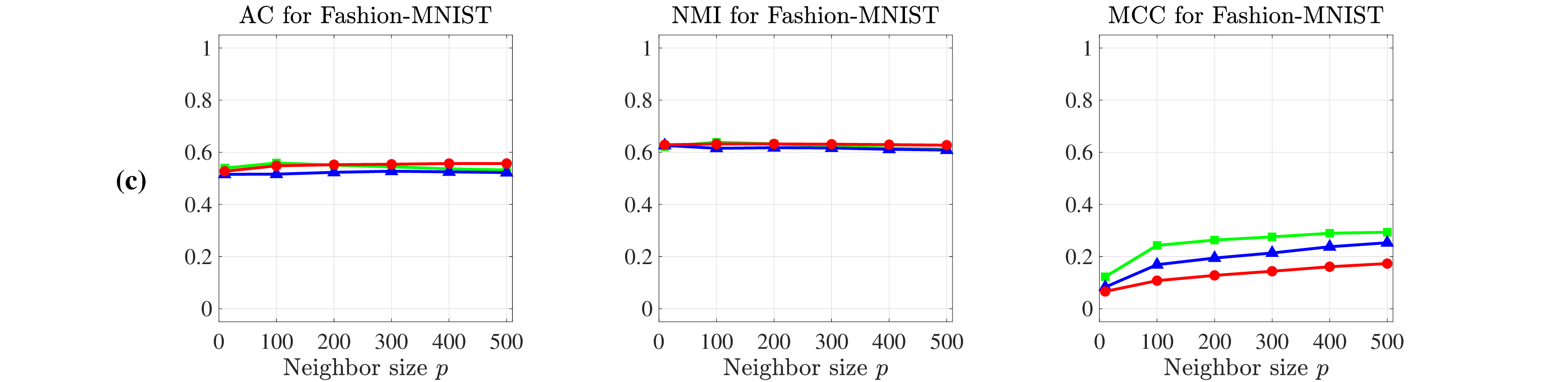}

 \smallskip
 \includegraphics[width=\linewidth]{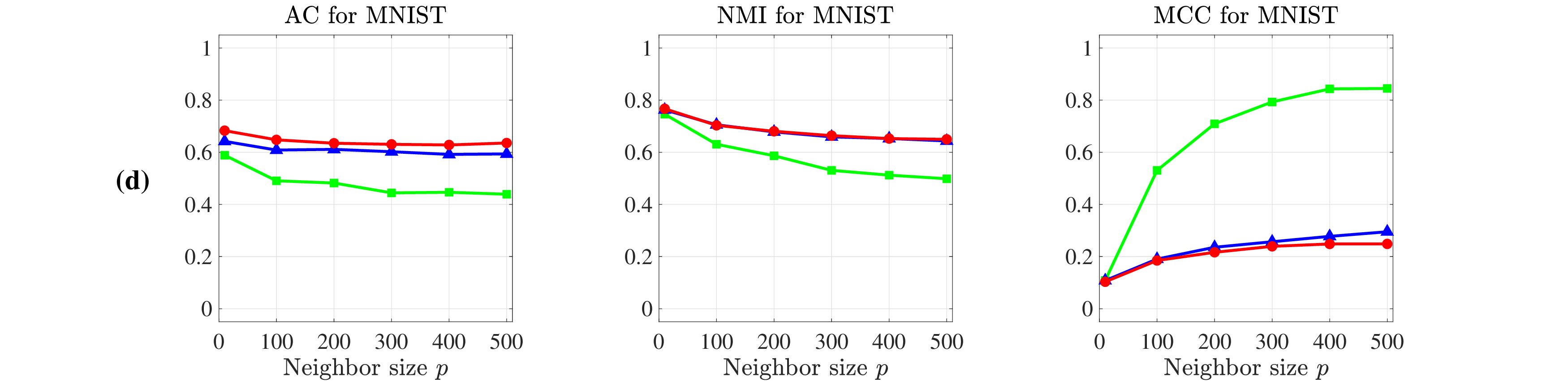}

 \smallskip
 \includegraphics[width=\linewidth]{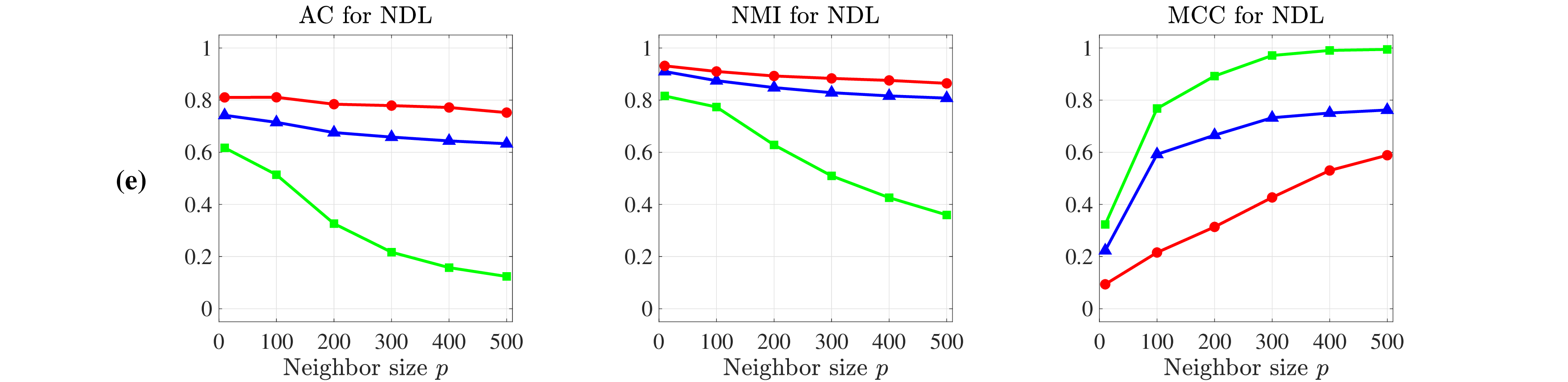}
 \caption{AC, NMI and  MCC of algorithms
 run on each dataset with varying neighbor sizes $p \in \{10, 100, 200, 300, 400, 500\}$:
 (a) EMNIST-Balanced, (b) ETL, (c) Fashion-MNIST, (d) MNIST, and (e) NDL.
 From left to right, figures display AC, NMI and MCC.
 The red points indicate the measurements of ELLI,
 the blue triangles indicate the average of the measurements of KSC
 over the $100$ trials of $k$-means++,
 and the green squares indicate the average of the measurements of GNMF
 over the $100$ trials of $k$-means++.}
 \label{Fig: results of 2nd exp}
\end{figure}

\section{Discussion and Future Research}

There remain issues that need to be addressed.
In Theorem \ref{Theo: main result}, we showed
the range of $\Upsilon$ to ensure that the output of ELLI coincides with 
an optimal $k$-way partition of a graph.
It is unclear whether the range can be further improved.
In the experiments on the image datasets, we experienced that
ELLI did not always achieve significantly  higher AC and NMI than those of KSC
even when the conductance of the clusters returned by ELLI was lower than that of KSC.
The main cause of this unfavorable situation could be that clusters with low conductance in a graph 
do not sufficiently capture the characteristics of manually assigned dataset classes.
This situation may be ameliorated by
revising the way of constructing adjacency matrices from image vectors.
We close this paper by suggesting directions of study for future research.
\begin{itemize}
 \item As explained in Section \ref{Sec: connection with separable NMFs},
       it should be possible to replace the use of an ellipsoid in Step 2 of ELLI
       with an algorithm for solving separable NMF problems.
       In particular, SPA and SNPA are fit for the purpose.
       It would be interesting to explore whether 
       a similar result as Theorem \ref{Theo: main result} can be obtained
       for using either of them instead of an ellipsoid.

 \item 
       Spectral clustering needs to construct adjacency matrices from datasets.
       The construction for large-scale datasets is time and memory consuming.
       To address the issue, the authors of \cite{Che11, Hua20} proposed the use of a bipartite graph 
       for representing the similarities between data.
       It would be interesting to investigate the performance of ELLI incorporated with this technique.

 \item We believe that ELLI works on hyperspectral unmixing problems.
       This problem asks one to find the spectra of constituent materials,
       called endmembers, from a hyperspectral image.
       It can be cast as a graph-based clustering problem.
       Our preliminary experiments often indicated that 
       the index set $I$ found by Step 2 of ELLI provides a good estimate of endmembers.

\end{itemize}

\section*{Acknowledgements}
The author would like to thank the anonymous reviewers for their helpful comments and suggestions,
and in particular one reviewer for providing Corollary \ref{Coro: obtained from main result}.
This research was supported by the Japan Society for the Promotion of Science
(JSPS KAKENHI Grant Numbers 15K20986, 26242027).

\appendix
\section*{Appendix \quad Proof of Lemma \ref{Lemm: bound on a'b}}

We use the following notation.
Let $\a = [a_1, \ldots, a_k]^\trans \in \Real^k$.
For $i \in \{1, \ldots, k\}$,
we denote by $\a_{\setminus i}$ a subvector obtained by removing the $i$th element $a_i$ from $\a$,
i.e.,  
\begin{align*}
\a_{\setminus i} = [a_1, \ldots, a_{i-1}, a_{i+1}, \ldots, a_k]^\trans \in \Real^{k-1}. 
\end{align*}
In a similar way, for $i,j \in \{1, \ldots, k\}$ with $i \neq j$, 
we denote by $\a_{\setminus i, j}$  a subvector obtained by removing the $i$th element $a_i$ and 
the $j$th element $a_j$ from $\a$. 
The proof of the lemma often makes use of the inequality, 
\begin{equation} \label{Exp: norm of a setminus u}
 \| \a_{\setminus i} \|_2 \le \sqrt{1 - \xi^2},
\end{equation}
for $\a \in C(i, \xi)$.
It is easy to verify.
Since $\|\a_{\setminus i}\|_2^2 = \|\a\|_2^2 - a_i^2 = 1 - a_i^2$,
we have $a_i^2 = 1 - \| \a_{\setminus i} \|_2^2$.
Also, since $0 \le \xi \le a_i$, we have $\xi^2 \le a_i^2$.
This leads to $\|\a_{\setminus i} \|_2^2 \le 1 - \xi^2$,
which means inequality (\ref{Exp: norm of a setminus u}).

\begin{proof}[Proof of Lemma \ref{Lemm: bound on a'b}]
We prove (a). Let us write $\a^\trans \b$ as 
 \begin{equation*}
  \a^\trans \b = a_i b_i + \a_{\setminus i}^\trans \b_{\setminus i}.
 \end{equation*}
Since $a_i \ge \xi \ge 0$ and $b_i \ge \xi \ge 0$, we have $a_i b_i \ge \xi^2$.
It follows from the Cauchy-Schwarz inequality and inequality (\ref{Exp: norm of a setminus u}) that
 \begin{equation*}
  |\a_{\setminus i}^\trans \b_{\setminus i}| \le \|\a_{\setminus i} \|_2 \|\b_{\setminus i} \|_2 \le 1 - \xi^2.  
 \end{equation*}
Consequently, we have $\a^\trans \b \ge  2 \xi^2 - 1$.
Next, we prove (b).
 Let us write $\a^\trans \b$ as 
\begin{equation*}
 \a^\trans \b = a_i b_i + a_j b_j + \a_{\setminus i, j}^\trans \b_{\setminus i, j}.
\end{equation*}
Since $\|\a\|_2 = 1 $, we have $a_i \le 1$.
Also, from inequality (\ref{Exp: norm of a setminus u}),
we have $b_i^2 \le \|\b_{\setminus j} \|_2^2 \le 1 - \xi^2$.
This leads to $b_i \le \sqrt{1 - \xi^2}$.
Hence, $a_i b_i \le \sqrt{1-\xi^2}$ holds.
Of course, $a_j b_j \le \sqrt{1-\xi^2}$ holds in the same way.
It follows from  the Cauchy-Schwarz inequality and inequality (\ref{Exp: norm of a setminus u}) that
\begin{equation*}
|\a_{\setminus i,j}^\trans \b_{\setminus i,j}| \le \|\a_{\setminus i,j} \|_2 \|\b_{\setminus i,j} \|_2
\le  \|\a_{\setminus i} \|_2 \|\b_{\setminus j} \|_2 \le 1 - \xi^2. 
\end{equation*} 
Consequently, we have $\a^\trans \b \le -\xi^2 + 2\sqrt{1-\xi^2}+1$.
\end{proof}

\bibliographystyle{abbrv}
\bibliography{reference}

\end{document}